\documentclass{article}

\usepackage{our-style}
\usepackage{natbib}
\usepackage{authblk}
\usepackage[left=3.81cm, right=3.81cm]{geometry}

\usepackage[utf8]{inputenc} 
\usepackage[T1]{fontenc}    
\usepackage{hyperref}       
\hypersetup{
    colorlinks=true,
    citecolor=darkgreen
}
\usepackage{url}            
\usepackage{booktabs}       
\usepackage{amsfonts}       
\usepackage{nicefrac}       
\usepackage{microtype}      
\usepackage{xcolor}         
\usepackage{listings}
\usepackage{subcaption}
\usepackage{float}
\usepackage{tcolorbox}
\tcbset{
    parbox=false,
    colback=gray!10!white,        
    colframe=gray!10!white,       
    boxrule=0mm,                  
    sharp corners,                
    width=\textwidth,             
    before=\vspace{0mm},          
    after=\vspace{0mm},           
    boxsep=1mm,                   
    left=0.5mm,                     
    right=0.5mm,                    
    top=1mm,                      
    bottom=1mm                    
}

\title{Catastrophic Compositional Generation: Why Vanilla Diffusion Models Fail to Extrapolate}

\author[1]{Duncan Soiffer\thanks{Corresponding author: dsoiffer@cs.cmu.edu}}
\author[1]{Chandler Squires}
\author[1]{Yuan Guan}
\author[2, 3]{Jason Hartford}
\author[1]{Pradeep Ravikumar}
\affil[1]{Machine Learning Department, Carnegie Mellon University}
\affil[2]{Valence Labs}
\affil[3]{Department of Computer Science, University of Manchester}
\date{}

\usepackage{changepage}
\begin{document}

\maketitle

\begin{adjustwidth}{0.145cm}{0.145cm} 
\begin{abstract}
    The task of \emph{compositional generation} involves using a conditional generative model, trained only on a subset of the possible conditions, to produce samples from compositionally-defined target distributions such as a geometric combination of the source distributions.
    In this work, we argue that this task is often infeasible for vanilla conditional diffusion models: we conjecture that no inference-time technique can efficiently produce samples from the target distribution in certain well-motivated settings.
    This idea is supported by theory-guided generalization arguments and carefully-designed experiments on both synthetic and realistic data.
    In particular, while recent methods such as Feynman-Kac correction reduce \emph{inference-time approximation error}, our results show that \emph{score estimation error} has a more catastrophic effect on performance when the target distribution is out-of-distribution with respect to the sources, highlighting the need for a different approach to this task.
\end{abstract}
\end{adjustwidth}

\addtocontents{toc}{\protect\setcounter{tocdepth}{0}}

\section{Introduction}\label{sec:intro}

The space of distributions that we would like to model is often far larger than the set of distributions to which we have access. 
We want models that can imagine arbitrary combinations of concepts (e.g. ``a \emph{living room} with a \emph{white couch}, a \emph{black chair}, two \emph{paintings}, a \emph{floor lamp} and nothing else''), but the data are only supported for some combinations of these concepts.
This is particularly true when data are derived from experiments, e.g., in biology, the space of possible experiments is combinatorially large (covering all possible combinations of molecules, gene knockouts, cell types, assays, etc.), but experimental budgets are finite.
Hence, there is significant interest in building perturbation effect prediction models that can predict the outcomes of unseen experiments \citep{lotfollahi2023cpa, roohani2024gears, wang2024sclambda, adduri2025predicting, wenkel2026txpert, bunne2024virtualcell, noutahi2025virtualcells}.

To extrapolate to new combinations, methods for \emph{compositional generation} rely on strong inductive biases, e.g., assuming the existence of some latent space in which the effects have some additive structure.
For example, if two biological perturbations $a$ and $a'$ are \emph{causally separable}, it can be shown that the double perturbation distribution $P(x \mid \DO(a), \DO(a'))$ can be expressed as a geometric combination of the control distribution $P(x)$ and the single perturbation distributions $P(x \mid \DO(a))$ and $P(x \mid \DO(a'))$ \citep{wang2023concept, xu2024}.
In principle, such results imply significant reductions in the amount of training data necessary to succeed at these difficult generation tasks.

In practice, however, even if the compositional distribution $P(x \mid \DO(a), \DO(a'))$ can be expressed in terms of distributions from which we have training samples, this does not imply that we can efficiently sample from $P(x \mid \DO(a), \DO(a'))$. 
Unfortunately, distribution-level identities do not necessarily translate into computationally and statistically efficient algorithms.
In particular, we consider procedures for composing 
\emph{conditional diffusion models}, given their practical relevance and powerful generative capabilities \citep{dhariwal2021diffusion}.
Notably, geometric combinations at the distribution level translate into linear combinations at the score level.
Hence, a popular heuristic for compositional generation with such models is to linearly combine the scores \citep{liu2022compositional} during the denoising process.
However, it is increasingly recognized that this na{\"i}ve approach introduces a fundamental source of error, even assuming that the models achieve perfect recovery: in general, adding noise to the distributions breaks the linear relationship between their score functions.

To account for this issue, recent particle-based algorithms \citep{skreta2025feynman, xie2026enhanced, ren2025driftlite} explicitly correct for this error. 
We focus on Feynman-Kac Correctors (FKC) \citep{skreta2025feynman}, a flexible instantiation of this approach that tracks importance weights across the denoising trajectory and uses them for correction.
Despite these recent advances, we highlight an issue which poses even more significant problems for composition. We find that in many settings of interest, the \emph{score estimation error}, i.e., the error propagated from errors in the model's learned score function, becomes the dominant factor, since the typical samples from the composed distribution are in low-density regions of the source distributions. 
Moreover, in these cases, we find that FKC reinforces these errors, further degenerating sample quality.



\paragraph{Contributions} Building on prior work, we first introduce a \emph{concrete formulation of the compositional generation task}, show that such compositions are \emph{closely connected to causal representation learning}, and \emph{formalize two sources of error} that make this task difficult (\Cref{sec:setup} and \Cref{sec:perspectives}).
Then, we provide \emph{theoretical insights into how these errors behave} for different collections of source distributions (\Cref{sec:theoretical-insights}), focusing on illustrative, analytically tractable settings.
Based on these insights, we perform careful experimentation to disentangle score estimation error from inference-time approximation error (\Cref{sec:experiments}). These experiments reveal that both sources of error lead to compositional failure within distinct, partially overlapping regimes, and that within the overlap, score estimation error often dominates attempts to correct for inference-time approximation error.

\section{Setup: Geometrically weighted compositions}\label{sec:setup}


Mathematically, we consider a \defword{perturbation space} $\cA$ and an \defword{outcome space} $\cX$.
For example, if a scientist can apply $K$ drugs at varying dosages, and measures cell images after such perturbations, we would have $\cA = \bbR^K_{\geq 0}$ and we would take $\cX$ to be the space of images.
Each perturbation $a \in \cA$ is associated with some ground truth distribution $P^a \in \cP$ over outcomes, where $\cP$ denotes the set of probability distributions on $\cX$.
In practice, we may only observe outcome data for a small subset $\Aobs \subseteq \cA$ of the overall perturbation space.
For example, when experiments are costly, as in perturbational cell imaging, we may only have observations for the control condition and for individual drugs at fixed dosages (i.e., $\Aobs = \{ \bzero \} \cup \{ \be_k : k \in [K] \}$, where $\be_k$ is the $k^{\text{th}}$ basis vector).

In such cases, we would like to \emph{extrapolate} to an unseen perturbation $a^* \in \cA \setminus \Aobs$.
Under a variety of well-motivated theoretical assumptions, the target distribution $P^{a^*}$ can often be identified from the observed distributions $( P^a )_{a \in \Aobs}$.
In this work, we consider a form that is common in several settings, where the density of distribution $P^{a^*}$ is expressed as a geometric mixture of the source distribution densities.
To ensure that our expression is well-defined, we assume throughout the paper that, for each $a \in \cA$, the distribution $P^a$ is associated with a density over $\cX$.
With a typical abuse of notation, we also write this density as $P^a$.
Further, we assume that these densities are positive, i.e., for all $a \in \cA_o$ and $x \in \cX$, we have $P^a(x) > 0$, ensuring that we avoid divisions by zero.\footnote{In \Cref{appendix:formal-assumption}, we give a more formal version of this assumption, and discuss the more formal interpretation of our equality expressions.}

\begin{definition}\label{def:weighted-composition}
    Fix $\Aobs \subseteq \cA$.
    Given a collection of distributions $\bP = (P^a)_{a \in \Aobs}$ and a function $w\colon \Aobs \to \bbR$, we say that $w$ is a \defword{valid weighting} if $f(x) = \prod_{a \in \Aobs} P^a(x)^{w_a}$ is integrable, where $w_a$ is shorthand for $w(a)$.
    For a valid weighting $w$, we define the \defword{weighted composition} of $\bP$ as 
    \begin{equation}\label{eqn:weighted-composition}
        \Comp_w(\bP)(x) \defeq \frac{1}{Z^w} \prod\nolimits_{a \in \Aobs} P^a(x)^{w_a},
        \quad
        \text{where}\
        Z^w \defeq \int_\cX \left( \prod\nolimits_{a \in \Aobs} P^a(x)^{w_a} \right) \d x.
    \end{equation}
    Alternatively, we write $\bP^w \defeq \Comp_w(\bP)$ as shorthand.
\end{definition}

This geometric combination at the distribution level becomes a linear combination at the score level.
Assuming $\log P^a$ is differentiable, the \emph{Stein score functions} are $S^a(x) \defeq \nabla_\xsym \log P^a(x)$ for $a \in \Aobs$, and $S^w(x) \defeq \nabla_\xsym \log P^w(x)$.
Then, for any valid weighting $w$, we have $S^w(x) = \sum\nolimits_{a \in \Aobs} w_a S^a(x)$.

\paragraph{Causal origins of weighted compositions}
To better understand the importance of weighted compositions, it is useful to briefly discuss the conditions under which they naturally arise.
As a canonical example, consider predicting the outcome of a double perturbation $a^* = \be_1 + \be_2$, given only observational data and single perturbations, i.e., $\Aobs = \{ \bzero, \be_1, \be_2 \}$, and suppose the outcome space $\cX$ can be decomposed into an upstream component $\cX_1$ and a downstream component $\cX_2$.
Reflecting this structure, we can factorize $\bP^\bzero$ as $P^\bzero(x) = P^\bzero(x_1) P^\bzero(x_2 \mid x_1)$.
In many systems, perturbations can be expected to have isolated, targeted effects: in causality, this can be expressed in terms of \emph{mechanism changes} or \emph{soft interventions} \citep{squires2023causal}.
In particular, suppose $\be_1$ affects the upstream component but not the downstream one, i.e., $P^{\be_1}(x) = P^{\be_1}(x_1) P^{\bzero}(x_2 \mid x_1)$, while $\be_2$ affects the downstream component but not the upstream one, i.e., $P^{\be_2}(x) = P^{\bzero}(x_1) P^{\be_2}(x_2 \mid x_1)$.

Further, perturbations may often be expected to have modular effects, known as the principle of \emph{independent causal mechanisms} \citep{scholkopf2021toward}.
Under this assumption, the double perturbation $a^*$ simply ``combines'' the effects of the single perturbations, so that $P^{a^*}(x) = P^{\be_1}(x_1) P^{\be_2}(x_2 \mid x_1)$.
In this case, it is simple to show that $P^{a^*} = \Comp_{w_\double}(\bP)$ for the valid weighting $w(\be_1) = w(\be_2) = 1$ and $w(\bzero) = -1$.
More broadly, this reasoning can be extended to more than two perturbations, and $\cX$ does not have to be directly decomposable into causal components: this decomposition may only hold in some unknown latent space.
In such cases, the field of \emph{causal representation learning} provides a more general justification for weighted compositions, as we describe in \Cref{sec:perspectives}.


\paragraph{The compositional generation task and its challenges}
With these motivating examples in mind, we define a concrete version of the compositional generation task as follows:

\noindent
\begin{tcolorbox}
\begin{task}[Compositional generation for weighted compositions]
    As inputs, take a conditional diffusion model $\bP_\theta \approx \bP_\star$ from $\Aobs$ to $\cX$ and a valid weighting $w\colon \Aobs \to \bbR$ for $\bP_\star$.
    Using an efficient algorithm, produce samples from a distribution $\tP \in \cP$ such that $\tP \approx \bP_\star^w$.

    More formally, let $d$ be some metric on the space of distributions $\cP$.
    Then, our input assumption ($\bP_\theta \approx \bP_\star$) says that we are given $\bP_\theta$ such that $d(P^a_\theta, P^a_\star) < \epsilon_1$ for all $a \in \cA_o$, and our output requirement ($\tP \approx \bP_\star^w$) says that we produce samples from some $\tP$ such that $d(\tP, \bP^w_\star) < \eps_2$.
\end{task}
\end{tcolorbox}

We remark that this task encounters two challenges.
First, one encounters \emph{inference-time approximation error}: for most model classes (e.g. diffusion models), we cannot efficiently sample from $\bP_\theta^w$, only some proxy $\tP \approx \bP^w_\theta$.
Second, we encounter \emph{score estimation error}: since we are using an estimated model $\bP_\theta \neq \bP_\star$, we generally have $\bP_\theta^w \neq \bP_\star^w$ for $\bP_\theta^w \defeq \Comp_w(\bP_\theta)$.
In particular, given any metric $d$ on the set of probability distributions $\cP$, the triangle inequality gives
\begin{equation*}
    d(\tP, \bP_\star^w) \leq 
    \underbrace{d(\tP, \bP_\theta^w)}_{\text{inference-time approximation error}}
    +
    \underbrace{d(\bP_\theta^w, \bP_\star^w)}_{\text{score estimation error}}.
\end{equation*}
As we discuss in \Cref{sec:composition}, recent methods such as \emph{Feynman-Kac correction} \citep{skreta2025feynman} provide elegant solutions to reduce the inference-time approximation error $d(\tP, \bP_\theta^w)$ when composing diffusion models.
Although reducing this error is important, we find that the score estimation error $d(\bP_\theta^w, \bP_\star^w)$, often overlooked in previous works, can be much more important in certain settings.
Specifically, we consider settings where typical samples from the weighted composition $\bP^w$ lie in low-density regions of the source distributions.
In these cases, we say that $\bP^w$ is \emph{out-of-distribution (OOD)}: since the estimated score functions were trained on $\bP = (P^a)_{a \in \cA_o}$, but are being evaluated on $\bP^w$, these cases lead to similar issues as those considered in the field of \emph{OOD generalization} \citep{david2010impossibility,kpotufe2018marginal,canatar2021out}.


 


\section{Technical background}\label{sec:background}
In this section, we review relevant background on conditional generative models and compositional generation.
Here, our focus is on conditional diffusion models, though we expect similar results to apply to other forms of conditional generative models.


\subsection{Conditional diffusion models}
\paragraph{Notation}
We let $\W_t$ denote a standard Wiener process, using the convention that $t=0$ is data and $t=1$ is noise.
We use the terms ``noising'' and ``denoising'' rather than ``forward'' and ``reverse'' to avoid confusion arising from conflicting conventions between diffusion and flow-based models. All denoising SDEs and ODEs are written in the same time variable $t$, but integrated from $t=1$ to $t=0$, and the SDE uses the reverse-time Wiener process $\rW_t$.



\paragraph{Noising}

Given a drift coefficient schedule $( \mu_t )_t$ and a diffusion coefficient schedule $( \sigma_t )_t$, we assume a \defword{noising SDE} of the form $\d x_t = u_t(x_t) \d t + \sigma_t \d W_t$, with $u_t(x) \defeq \mu_t x$,
which we refer to as the \emph{noising drift function}.
Given $x_0 \sim P^a$, the solution to this SDE is a stochastic process $(\rvX^a_t)_{t}$, and defines the \defword{noising transition kernel} $\Noise_t\colon \cX \to \cP(\cX)$, where
\begin{equation}\label{eq:noise}
    \Noise_t(x_0)
    =
    \cN(\alpha_t x_0 ; \gamma_t^2 I),\quad
    \text{for}\quad
    \alpha_t \defeq \exp \left( \int_0^t \mu_s \d s \right)\ \text{and}\
    \gamma_t \defeq \alpha_t \sqrt{\int_0^t \frac{\sigma_s^2}{\alpha_s^2} \d s}.
\end{equation}
This kernel extends naturally to a function on distributions.
With some abuse of notation, we write $\Noise_t\colon \cP(\cX) \to \cP(\cX)$, where $\Noise_t\colon P \mapsto \int \Noise_t(x) P(x) \d x$, and we write $P_t^a \defeq \Noise_t(P^a)$ for the noised distribution.
Put differently, $P_t^a$ is the marginal distribution of samples at time $t$ after drawing an unnoised sample from $P^a$.
Then, each noised distribution $P^a_t$ has the associated Stein score function $S^a_t(x) \defeq \nabla_\xsym \log P^a_t(x)$. 
For any noising SDE, there exists a corresponding deterministic ODE with identical marginal distributions $P^a_t$ at all times $t\in[0,1]$ \citep{song2021scorebased}.
Hence, we define this \defword{noising ODE} as $\d x_t = \tilde{u}_t(x_t) \d t$ for $\tilde{u}_t(x) \defeq u_t(x) - \frac{1}{2}\sigma_t^2 S^a_t(x)$.

\paragraph{Denoising}
Diffusion models leverage the fact that the stochastic process $(\bX^a_t)_t$ also solves the \defword{denoising SDE}
\begin{equation}\label{eqn:denoising-sde}
    \d x^a_t = v^a_t(x^a_t) \d t + \sigma_t \d \rW_t, 
    \quad\text{for}\quad
    v^a_t(x) \defeq u_t(x) - \sigma_t^2 S^a_t(x),
\end{equation}
as given in \citet{song2021scorebased}.
Alternatively, the \defword{denoising ODE} is defined by running the noising ODE backward in time.
In practice, the conditional score functions $S^a_t$ can be estimated by a variety of methods \citep{hyvarinen2005score, vincent2011connection, denoising-diffusion-probabilistic-models, song2021scorebased, lipman2023flow, albergo2023stochastic}, with the conditional score functions typically parameterized as conditional deep networks \citep{perez2018, ho2022classifier}. 
We consider ``vanilla'' models in the sense that they are trained only to estimate scores (or some known transformations thereof) on observed conditions, using standard objectives and shared parameters, without architectural or training-time constraints tailored to compositional extrapolation. Our experiments use denoising diffusion \citep{denoising-diffusion-probabilistic-models}, but the results apply to other vanilla score estimators (up to differences in estimation error).

\subsection{Composition methods}\label{sec:composition}
Now, we discuss methods for sampling from weighted compositions of conditional diffusion models.
Noising $\bP^w \defeq \Comp_w(\bP)$ and taking its score, we obtain the \defword{correctly-composed score function}
\begin{equation*}
    S^w_t(x) \defeq \nabla_\xsym \log \bP^w_t(x),
    \quad
    \text{for}\
    \bP^w_t \defeq \Noise_t(\bP^w)
\end{equation*}
Ideally, we would like to run the denoising SDE in \Cref{eqn:denoising-sde} with $v^w_t(x) \defeq u_t(x) -\sigma_t^2 S^w_t(x)$ in place of $v^a_t(x)$.
However, we do not know $S^w_t(x)$ or have a simple way to compute it. Instead, a simple heuristic used in many prior works (e.g., classifier-free guidance \citep{ho2022classifier} and compositional visual generation \citep{liu2022compositional}) substitutes this score with an additive approximation.
In particular, the heuristic uses
the \defword{na{\"i}vely-composed score function}
\begin{equation*}
    S^{w,\naive}_t(x)
    \defeq
    \sum\nolimits_{a \in \Aobs} w_a S^a_t(x).
\end{equation*}
The corresponding \defword{na\"ive denoising SDE} is thus given by
\begin{equation}\label{eqn:naive-denoising-sde}
    \d x_t^{w,\nvs}
    =
    v^{w,\naive}_t(x_t^{w,\nvs}) \d t + \sigma_t \d \rW_t,
    \quad\text{where}\quad
    v^{w,\naive}_t(x)
    \defeq
    u_t(x) - \sigma^2_t S_t^{w,\naive}(x),
\end{equation}
and the corresponding \defword{na\"ive denoising ODE} is
\begin{equation}\label{eqn:naive-denoising-ode}
    \d x^{w,\nvo}_t = \tilde{v}^{w,\naive}_t(x^{w,\nvo}_t) \d t \quad\text{where}\quad \tilde{v}^{w,\naive}_t(x) \defeq u_t(x) - \frac{1}{2}\sigma_t^2 S_t^{w,\naive}(x).
\end{equation}

Notably, the definition of weighted composition ensures that $S_t^{w,\naive}(x) = S_t^w(x)$ at $t=0$, and the definition of the noising SDE ensures that, if $\sum_a w_a=1$, then $S_t^{w,\naive}(x) = S_t^w(x)$ at $t=1$.
However, it is not generically true that $S_t^{w,\naive}(x) = S_t^w(x)$ for intermediate times $t\in(0,1)$.
Due to this discrepancy, na{\"i}ve denoising methods can introduce substantial inference-time approximation error, as we illustrate in \Cref{sec:multivariate-gaussian}.



In contrast to na\"ive methods, Feynman-Kac Correctors \citep{skreta2025feynman} explicitly account for this discrepancy through a particle-based Sequential Monte Carlo approach. By maintaining $K$ weighted particles updated according to a residual term $g_t(x)$, and resampling according to these weights, FKC tracks the distribution of a target along the entire denoising trajectory. We refer readers to \Cref{appendix:fkc} for further theoretical intuition on FKC.

\newtheorem*{remark}{Remark}
\section{Theoretical insights}\label{sec:theoretical-insights}


To obtain a more fine-grained view of approximation error, we begin by studying weighted compositions of multivariate Gaussians, which also form the basis for our first experiments in \Cref{sec:experiments}. 

\subsection{Multivariate Gaussians}\label{sec:multivariate-gaussian}

\begin{theorem}
    \label{theorem:gaussian-integrability}
    Let $P^a = \cN(0, \Sigma_a)$ for $a \in \Aobs$.
    Then, $w$ is a valid weighting function if and only if $\sum_{a \in \Aobs} w_a \Sigma_a^{-1} \succ 0$.
    In this case, $\bP^w(x) = \cN(0, \Sigma^w)$ for 
    \begin{equation}\label{eq:gaussian-precision}
        \Sigma^w = \left( \sum\nolimits_{a \in \Aobs} w_a \Sigma_a^{-1} \right)^{-1}
    \end{equation}
\end{theorem}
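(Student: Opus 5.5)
Write the unnormalized product explicitly. Each $P^a(x) = (2\pi)^{-n/2}\det(\Sigma_a)^{-1/2}\exp\bigl(-\tfrac12 x^\top \Sigma_a^{-1} x\bigr)$, so
\begin{equation*}
    f(x) = \prod\nolimits_{a \in \Aobs} P^a(x)^{w_a} = C \exp\left(-\tfrac12 x^\top M x\right),
    \qquad M \defeq \sum\nolimits_{a \in \Aobs} w_a \Sigma_a^{-1},
\end{equation*}
where $C = \prod_a \bigl((2\pi)^{-n/2}\det(\Sigma_a)^{-1/2}\bigr)^{w_a} > 0$ is a constant independent of $x$. This uses only that $w_a$ is real and $\Aobs$ finite; if $\Aobs$ is infinite I would restrict attention to $w$ of finite support, as implicit in the definition. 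The whole statement then reduces to a single claim: $x \mapsto \exp(-\frac12 x^\top M x)$ is integrable over $\bbR^n$ if and only if the symmetric matrix $M$ is positive definite.

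For the ``if'' direction, I assume $M \succ 0$. Then the standard Gaussian integral gives $\int \exp(-\frac12 x^\top M x)\,\d x = (2\pi)^{n/2}\det(M)^{-1/2} < \infty$, so $w$ is valid. Normalizing by $Z^w$ yields exactly the density of $\cN(0, M^{-1})$, i.e.\ $\Sigma^w = M^{-1}$, which is \eqref{eq:gaussian-precision}.

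For the ``only if'' direction, I assume $M \not\succ 0$. I would diagonalize $M = U \Lambda U^\top$ with $U$ orthogonal and change variables $y = U^\top x$; the Jacobian is $1$, and the integrand factorizes as $\prod_i \exp(-\frac12 \lambda_i y_i^2)$. Since some eigenvalue satisfies $\lambda_j \le 0$, the factor $\exp(-\frac12\lambda_j y_j^2) \ge 1$ for all $y_j$, so its integral over $\bbR$ diverges. By Tonelli's theorem (the integrand is nonnegative), the full integral is infinite, so $f$ is not integrable and $w$ is not valid.

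The argument is routine. The only points needing care are making this Tonelli/factorization step precise when some other eigenvalues are positive (each such factor has a positive finite integral, so the product is still $+\infty$) and noting that $M$ is symmetric, so that diagonalization by an orthogonal matrix applies.
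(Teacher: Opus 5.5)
Your proposal is correct and follows the same route as the paper: you reduce the weighted product to $C\exp(-\tfrac12 x^\top M x)$ with $M=\sum_a w_a\Sigma_a^{-1}$, use the fact that this kernel is integrable iff $M\succ 0$, and normalize to get covariance $M^{-1}$. The only difference is that you prove the ``only if'' direction explicitly, via orthogonal diagonalization and Tonelli, whereas the paper simply cites this integrability criterion as well known.
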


The proof is given in \Cref{appendix:proof-gaussian-integrability}.
However, even for simple and analytically tractable Gaussians, the composition operator does not commute with noising, which leads the na\"ive denoising process to sample from the incorrect distribution:

\begin{theorem}\label{thm:commutative-gaussian-ode}
    Let $P^a = \cN(0, \Sigma_a)$ for $a \in \Aobs$, and assume $\{\Sigma_a\}_{a \in \Aobs}$ are mutually commutative and that $\sum_{a \in \Aobs} w_a = 1$. Then, the na\"ive denoising ODE \eqref{eqn:naive-denoising-ode} yields $\tP^{w,\nvo} = \cN(0, \Sigma^{w,\nvo})$, where
\begin{equation*}
    \Sigma^{w,\nvo} = \prod\nolimits_{a \in \Aobs} \Sigma_a^{w_a}.
\end{equation*}
\end{theorem}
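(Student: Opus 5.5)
The approach is to exploit the fact that, for Gaussian sources, every quantity in the na\"ive denoising ODE \eqref{eqn:naive-denoising-ode} is linear in $x$. The ODE then propagates a Gaussian to a Gaussian, and commutativity reduces the matrix problem to scalar ones. The plan is to guess the covariance at every time $t$ and verify that it satisfies the covariance evolution equation.

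\emph{Step 1: noised scores.} From \eqref{eq:noise}, $P^a_t = \cN(0, \Lambda_{a,t})$ with $\Lambda_{a,t} \defeq \alpha_t^2 \Sigma_a + \gamma_t^2 I$. Hence $S^a_t(x) = -\Lambda_{a,t}^{-1} x$. Since the $\Sigma_a$ commute, they are simultaneously diagonalizable by a single orthogonal $U$. Every $\Lambda_{a,t}$ and every inverse or real power of it is then diagonal in the same basis. The na\"ive velocity is therefore linear:
\begin{equation*}
    \tilde v^{w,\naive}_t(x) = M_t x, \qquad M_t \defeq \mu_t I + \tfrac12 \sigma_t^2 \sum\nolimits_a w_a \Lambda_{a,t}^{-1}.
\end{equation*}
The matrix $M_t$ is diagonal in the basis $U$.

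\emph{Step 2: initialization.} The ODE is started at $t=1$ from the (na\"ively) composed noise distribution, namely the Gaussian with precision $\sum_a w_a \Lambda_{a,1}^{-1}$. Because $\sum_a w_a = 1$, this coincides with the true $P^w_1$ in the regime where $\alpha_1 \approx 0$. In general I will take the initial covariance to be $C_1 = \prod_a \Lambda_{a,1}^{w_a}$, the geometric combination of the noised sources at $t=1$; I expect this to match the paper's convention. Since the flow map of a linear ODE is linear, $x_t = \Phi_t x_1$, and $x_t \sim \cN(0, C_t)$ with
\begin{equation*}
    \dot C_t = M_t C_t + C_t M_t^\top = 2 M_t C_t,
\end{equation*}
where the last equality uses that $M_t$ and $C_t$ commute.

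\emph{Step 3: key verification.} I will show that the ansatz $C_t = \prod_a \Lambda_{a,t}^{w_a}$ solves this equation. Fix an eigendirection, and let $\lambda_a$ be the corresponding eigenvalue of $\Sigma_a$. The relevant scalar is
\begin{equation*}
    \ell_a(t) \defeq \alpha_t^2 \lambda_a + \gamma_t^2 .
\end{equation*}
From \eqref{eq:noise} we have $\tfrac{d}{dt}\alpha_t^2 = 2\mu_t \alpha_t^2$ and $\tfrac{d}{dt}\gamma_t^2 = 2\mu_t \gamma_t^2 + \sigma_t^2$. Therefore
\begin{equation*}
    \tfrac{d}{dt} \log \ell_a = 2\mu_t + \frac{\sigma_t^2}{\ell_a}.
\end{equation*}
Weighting by $w_a$ and summing, the condition $\sum_a w_a = 1$ turns $\sum_a w_a \cdot 2\mu_t$ into $2\mu_t$. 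This gives
\begin{equation*}
    \tfrac{d}{dt} \log \prod\nolimits_a \ell_a^{w_a} = 2\mu_t + \sigma_t^2 \sum\nolimits_a \frac{w_a}{\ell_a},
\end{equation*}
which is exactly twice the corresponding eigenvalue of $M_t$. This step is the only place where $\sum_a w_a = 1$ is used, and it is the crux of the argument.

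\emph{Step 4: conclusion.} By uniqueness of solutions to the linear ODE with the matching initial condition at $t=1$, we get $C_t = \prod_a \Lambda_{a,t}^{w_a}$ for all $t$. Evaluating at $t=0$, where $\alpha_0 = 1$ and $\gamma_0 = 0$, gives $\Lambda_{a,0} = \Sigma_a$ and hence $\Sigma^{w,\nvo} = \prod_a \Sigma_a^{w_a}$.

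The main obstacle is bookkeeping rather than depth. First, I must confirm that the initialization convention yields $C_1 = \prod_a \Lambda_{a,1}^{w_a}$ exactly. Second, I must check that the real matrix powers $\Lambda_{a,t}^{w_a}$ are well defined, which holds because each $\Lambda_{a,t}$ is positive definite, and that they commute, which holds because they are all diagonal in the basis $U$. If the initialization is instead pure noise $\cN(0, \gamma_1^2 I)$, I would note that this agrees with $C_1$ precisely when $\alpha_1 = 0$.
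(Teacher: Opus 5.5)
Your proposal is correct and follows essentially the paper's route. The paper likewise writes the na\"ive ODE as $\dot x_t = A_t x_t$, uses $\dot\Sigma_t = 2A_t\Sigma_t$, and proves $\frac{d}{dt}\ln\Sigma^a_t = 2\mu_t I + \sigma_t^2(\Sigma^a_t)^{-1}$; it then uses $\sum_a w_a = 1$ to obtain $\frac{d}{dt}\ln\Sigma_t = \sum_a w_a \frac{d}{dt}\ln\Sigma^a_t$ and integrates this directly, where you instead verify an ansatz eigenvalue by eigenvalue and invoke uniqueness. Your initialization concern is settled by the paper's convention $\alpha_1 = 0$, $\gamma_1 = 1$: every $\Lambda_{a,1} = I$, so the ODE starts from $\mathcal{N}(0, I)$, which is exactly your $C_1$.
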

The proof is given in \Cref{appendix:proof-commutative-gaussian-ode}. 
In this case, as a result of this approximation error, addition and subtraction are effectively replaced by multiplication and division after the denoising process. These resulting quantities can in fact be \emph{arbitrarily far from each other}, as a simple example demonstrates: 
take $|\Aobs| = 2$, $w_1 = w_2 = \tfrac{1}{2}$, and $\Sigma_1 = \varepsilon I$, $\Sigma_2 = \varepsilon^{-1} I$ for $\varepsilon > 0$. Then $\Sigma^{w,\nvo} = I$ and $\Sigma^w= \frac{2\varepsilon}{1 + \varepsilon^2}\, I$,
so $\|\Sigma^{w,\nvo}\|_{\mathrm{op}} / \|\Sigma^w\|_{\mathrm{op}} 
= (1 + \varepsilon^2)/(2\varepsilon) \to \infty$ as $\varepsilon \to 0^+$.
Accordingly, this motivates the need for methods of reducing inference-time approximation error in weighted compositions. 
However, as we will see, there are also distributions of interest for which approximation error is \textit{not} a concern.

\subsection{Base-composed distributions}


As a special case of weighted compositions, we are interested in studying what we refer to as \defword{base-composed distributions}: $\bP^{w}$ for the valid weighting $w_a = 1 -\kron_{a=0}\cdot|\Aobs| ~~\forall a \in \Aobs$, with the corresponding distribution given by
\begin{equation}\label{eq:base-composed-distribution}
   \bP^w(x)= \frac{1}{Z^w} P^0(x)\prod\nolimits_{a \in \Aobs} \frac{P^a(x)}{P^0(x)}.
\end{equation}

Recall that, in causal representation learning, these compositions are motivated by a view of perturbations as interventions.
As an example, the base case $P^0$ may represent a control trial while the other perturbation distributions model the effects of specific drugs.






\subsubsection{Factorized conditionals} 
In addition to being theoretically well-motivated, some base-compositions admit especially helpful properties. 
Of these, we focus on \emph{Factorized Conditionals}, which can be interpreted as a causal model over disconnected components (see \Cref{sec:perspectives}).
\begin{definition}[\cite{bradley2025mechanisms}, modified]
    A collection of distributions $(P^0, P^{a_1}, \allowbreak P^{a_2}, \allowbreak \dots \allowbreak P^{a_k})$ over $\mathbb{R}^n$ are \defword{Factorized Conditionals} if there exists a partition $M_0, M_1, \dots M_k$ of $[n]$ such that
    \begin{equation}\label{eq:factorized-conditionals}
        P^{a_i}(x) = P^{a_i}(x_{M_{i}})P^0(x_{M_{i}^c}) \qquad \text{and} \qquad P^0(x) = P^0(x_{M_{0}}) \prod\nolimits_{i\in[k]} P^0(x_{M_{i}}),
    \end{equation}
    where $M_i^c$ denotes the set complement of $M_i$.
\end{definition}
Visually, this could correspond to placing objects which occur in disjoint regions of an image (e.g., couch and wall-mounted painting) against a shared background scene $P^0$ (e.g., empty room).
If a set of distributions obey these properties, then \textit{assuming scores with zero error}, running the na{\"i}ve denoising SDE on the base-composed distribution correctly samples from \eqref{eq:base-composed-distribution} at $t=0$.
More generally, if \eqref{eq:factorized-conditionals} holds in a latent feature space related to the input by an invertible orthogonal transform, the naive denoising SDE still correctly samples the base composition \eqref{eq:base-composed-distribution} \citep{bradley2025mechanisms}.
For instance, conditions could orthogonally represent the style (e.g. ``realistic'' or ``impressionist'') of an image versus its content (e.g. ``cat'' or ``dog'').

Furthermore, if a set of distributions are Factorized Conditionals, then as we prove in \Cref{appendix:g-is-zero-in-FC}, we can also make strong claims about how FKC behaves on their base composition:

\begin{theorem}\label{thm:fkc-factorized-conditional}
    Assume the set of distributions $(P^0, P^{a_1}, P^{a_2}, \dots P^{a_k})$ are Factorized Conditional. Then, for the base-composition on $(P^0, P^{a_1}, P^{a_2}, \dots P^{a_k})$, the Feynman-Kac Corrector log-weight drift vanishes:
    \begin{equation*}
        g_t(x) = 0 \;~ \text{for all} ~\; x \in \mathbb{R}^n, ~t \in [0,1],
    \end{equation*}
    and the Feynman-Kac Corrector denoising process coincides exactly with the na\"ive denoising SDE.
\end{theorem}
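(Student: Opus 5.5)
Since the paper defers the precise form of $g_t$ to \Cref{appendix:fkc}, I first recall the FKC formula for a product-of-experts target built from linear score combinations. In \citet{skreta2025feynman}, the particles follow the na\"ive denoising SDE with drift $v^{w,\naive}_t$. The log-weights then evolve by a drift $g_t(x)$ that measures how far $\log$ of the unnormalized product $\prod_a P^a_t(x)^{w_a}$ is from satisfying the Fokker--Planck equation of the na\"ive process. For a linear score combination, this residual takes the form
\begin{equation*}
g_t(x) = \frac{\sigma_t^2}{2}\Bigl( \sum\nolimits_a w_a \Delta \log P^a_t + \bigl\|\textstyle\sum_a w_a S^a_t\bigr\|^2 - \sum\nolimits_a w_a\bigl(\Delta\log P^a_t + \|S^a_t\|^2\bigr)\Bigr) + \mu_t n\Bigl(\sum\nolimits_a w_a - 1\Bigr),
\end{equation*}
up to a term constant in $x$ that is absorbed by normalization. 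The Laplacian terms cancel, leaving
\begin{equation*}
g_t(x) = \frac{\sigma_t^2}{2}\Bigl( \bigl\|\textstyle\sum_a w_a S^a_t\bigr\|^2 - \sum_a w_a \|S^a_t\|^2\Bigr) + c_t.
\end{equation*}
For the base composition, $w_0 = 1-k$ and $w_{a_i} = 1$, so $\sum_a w_a = 1$ and the drift-mismatch constant $c_t$ vanishes. The proof therefore reduces to showing that this quadratic form in the scores is identically zero under the Factorized Conditionals assumption.

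\textbf{Step 1: factorization survives noising.} The noising kernel $\Noise_t$ is an isotropic Gaussian $\cN(\alpha_t x_0, \gamma_t^2 I)$, so it acts independently on each block of coordinates. Hence the product structure in \eqref{eq:factorized-conditionals} is preserved for every $t$:
\begin{equation*}
P^{a_i}_t(x) = P^{a_i}_t(x_{M_i})\,P^0_t(x_{M_i^c}), \qquad P^0_t(x) = \prod\nolimits_{j=0}^k P^0_t(x_{M_j}).
\end{equation*}
Consequently the score $S^{a_i}_t$ equals $S^0_t$ on every block except $M_i$. Write $D_i(x) \defeq S^{a_i}_t - S^0_t$. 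Then $D_i$ is supported only on the coordinates in $M_i$. Since the blocks are disjoint, the $D_i$ are mutually orthogonal, and $D_i$ depends only on $x_{M_i}$.

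\textbf{Step 2: the algebraic identity.} Write each score as $S^{a_i}_t = S^0_t + D_i$ and use $\sum_a w_a = 1$. The combined score becomes
\begin{equation*}
\sum\nolimits_a w_a S^a_t = S^0_t + \sum\nolimits_i D_i,
\end{equation*}
so its squared norm is $\|S^0_t\|^2 + 2\langle S^0_t, \sum_i D_i\rangle + \sum_{i,j}\langle D_i, D_j\rangle$. On the other side,
\begin{equation*}
\sum\nolimits_a w_a\|S^a_t\|^2 = \|S^0_t\|^2 + \sum\nolimits_i \bigl(2\langle S^0_t, D_i\rangle + \|D_i\|^2\bigr).
\end{equation*}
The difference is therefore $\sum_{i\ne j}\langle D_i, D_j\rangle$, which is zero by the orthogonality from Step 1. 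This gives $g_t \equiv 0$ for all $x$ and all $t$.

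\textbf{Step 3: conclusion.} With $g_t \equiv 0$, all particle weights stay equal along the trajectory. Resampling then has no effect in distribution, so the FKC process reduces to independent runs of the na\"ive denoising SDE.

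I expect the main obstacle to be stating $g_t$ correctly in the paper's conventions. This includes the time direction, the $\sigma_t^2$ factors, the treatment of the drift divergence $\nabla\cdot u_t = n\mu_t$, and whether any $x$-independent terms remain. I would derive $g_t$ directly: compute $\partial_t \log \prod_a (P^a_t)^{w_a}$ from each $P^a_t$'s Fokker--Planck equation, and subtract the generator of the na\"ive process applied to the same function. Doing this explicitly confirms that only the cross terms $\langle D_i, D_j\rangle$ survive, and that any leftover constant is removed by normalization.
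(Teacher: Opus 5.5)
Your proof is correct, and its skeleton matches the paper's. The divergence term drops because $\sum_a w_a = 1$. Isotropic noising preserves the block factorization at every $t$ (the paper's Lemma~\ref{lem:factorization-preserved}). The quadratic score form then vanishes by block orthogonality, and uniform weights make resampling inert. Your $g_t$ is the negative of \eqref{eq:g-sde}; this only reflects your reverse-time convention and is immaterial here.

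The difference is in how the key algebra is organized. The paper splits every score into block-supported pieces $\tilde u_0, \tilde u_a, \tilde v_a$. It then checks by bookkeeping that both $\|\sum_a w_a S^a_t\|^2$ and $\sum_a w_a \|S^a_t\|^2$ reduce to $\|\tilde u_0\|^2 + \sum_a \|\tilde v_a\|^2$. You instead write $S^{a_i}_t = S^0_t + D_i$ and obtain the identity $\|\sum_a w_a S^a_t\|^2 - \sum_a w_a \|S^a_t\|^2 = \sum_{i \neq j} \langle D_i, D_j \rangle$. This identity holds for any base composition with no structural assumption; factorization is used only to make the $D_i$ pairwise orthogonal.

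Your route is slightly shorter and isolates exactly the condition needed: pairwise orthogonality of the score shifts $S^{a_i}_t - S^0_t$ at all $t$. It also gives an explicit formula for the FKC drift when factorization fails, namely the pairwise interaction of the perturbation effects in score space. Finally, it makes the paper's subsequent Remark quantitative: with learned scores, $D_i$ becomes $D_i + \varepsilon^{a_i} - \varepsilon^0$, and these cross terms no longer cancel. The paper's version is more explicit about which block each term lives on but is otherwise equivalent.

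One small precision point in your Step 3 concerns the claim that the particles remain independent naive runs. This holds because the paper uses systematic resampling, which with equal weights returns every particle exactly once. Under multinomial resampling only the per-particle marginals would be preserved, though that still suffices for the output distribution.
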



\begin{remark}
    Under these conditions, FKC provides no benefit as there is no inference-time approximation error to correct. Further, with learned scores $S^a_\theta = S^a + \varepsilon^a$ (due to estimation error), the 
    cancellation which caused the residual $g_t$ to become $0$ no longer holds exactly. As a result, $g_t$ is driven entirely by estimation noise, so resampling concentrates particles on noise rather than signal. Consequently, FKC drifts further from the true distribution rather than closer, and this effect worsens for out-of-distribution compositions, where $\varepsilon^a$ is expected to be larger.
\end{remark}




\section{Experiments}\label{sec:experiments}
The goal of our empirical evaluation is to disentangle the two highlighted sources of error in compositional generation: inference-time approximation error and score estimation error.
We partition our tasks into In-Distribution (ID) settings, where the composed target $\bP^w$ is well-represented within the training data sampled from the source distributions, and Out-of-Distribution (OOD) settings, where the composed target does not lie within the effective training support of any of the distributions. 
This affords us principled control over the score estimation error.
To separate out the effects of approximation error, we further adjust distributions to either be Factorized Conditionals or Non-Factorized Conditionals.

In each experiment, we define two perturbation distributions $P^{a_1}$ and $P^{a_2}$ alongside a base distribution $P^0$. 
A conditional diffusion model $\bP_\theta$ is trained to sample from these distributions (training details are provided in Appendix~\ref{appendix:exp}), after which FKC is applied to this model to sample from the base-composed distribution \eqref{eq:base-composed-distribution}. 
We control the strength of the Feynman-Kac correction by altering the number of simulated particles ($K$), and we exert further control over the score estimation error by varying the number of training samples ($N$) or by using the analytic score functions.
On synthetic experiments, we evaluate using the sliced Wasserstein-2 distance ($\mathrm{SW}_2$) and Maximum Mean Discrepancy ($\mathrm{MMD}^2$) compared to the ground truth base-composition (\Cref{appendix:exp}). 
In the main text, we report $\mathrm{SW}_2$ using mean values, while \Cref{appendix:tables} provides both metrics and reports standard deviation.

In addition to conditional models, we also evaluate compositions of ``expert'' models each trained only on a single condition. 
Notably, we find that score estimation error worsens in these settings, suggesting that the weight-sharing of the conditional diffusion model plays an important role in implicitly regularizing the models towards compositional accuracy (\Cref{appendix:sep-vs-conditional}).
The code for the experiments is provided at \url{https://github.com/DSoiffer/compositional-diffusion}.

\begin{figure}[htbp]
    \centering
    \begin{subfigure}[t]{0.41\linewidth}
        \centering
        \includegraphics[width=\linewidth]{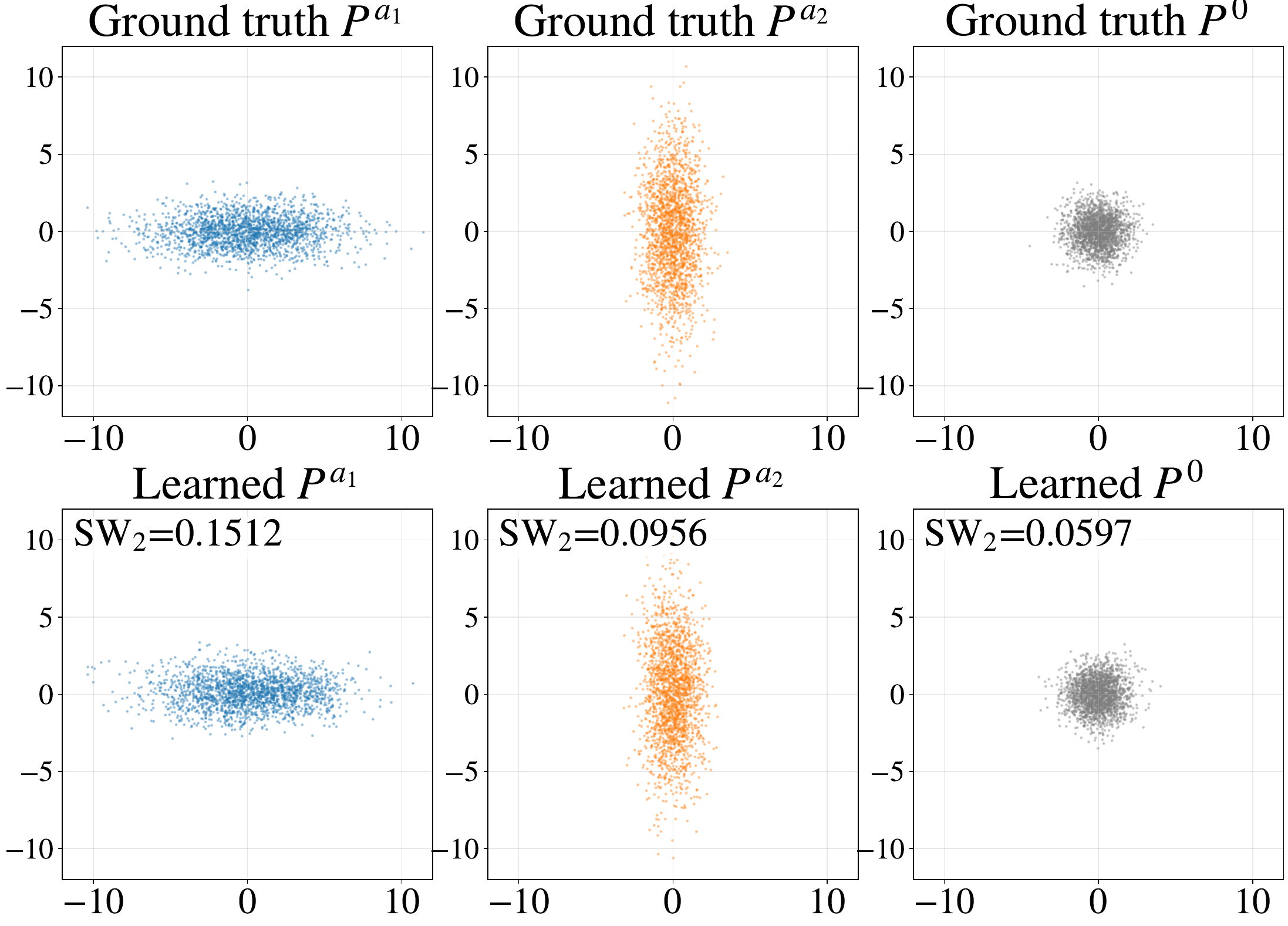}
        \caption{Individual distributions}
        \label{fig:analytical}
    \end{subfigure}
    \hfill
    \begin{subfigure}[t]{0.58\linewidth}
        \centering
        \includegraphics[width=\linewidth]{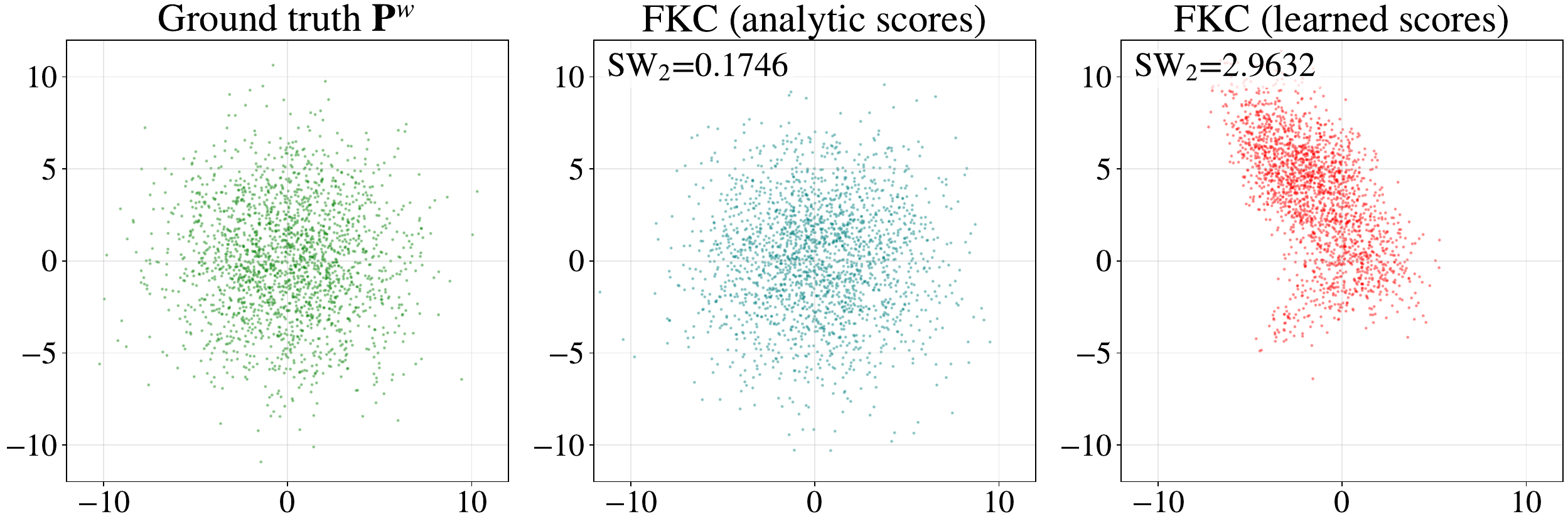}
        \caption{Composition}
        \label{fig:learned}
    \end{subfigure}
    \caption{\textbf{Feynman-Kac Correctors accurately estimate the composition with analytical scores, but fail with learned distributions due to out-of-distribution estimation error.} Learned and analytical distributions for $P^0 =\mathcal{N}\left(\mathbf{0}, \begin{bsmallmatrix} 1 & 0\\ 0 & 1 \end{bsmallmatrix}\right)$, $P^{a_1}=\mathcal{N}\left(\mathbf{0}, \begin{bsmallmatrix} 10 & 0\\ 0 & 1 \end{bsmallmatrix}\right)$ and $P^{a_2}=\mathcal{N}\left(\mathbf{0}, \begin{bsmallmatrix} 1 & 0\\ 0 & 10 \end{bsmallmatrix}\right)$ and the composition $P(x) \propto \frac{P^{a_1}(x)P^{a_2}(x)}{P^0(x)}$. Conditional diffusion models accurately learn the conditional distributions within high probability regions of the training data, but fail to compose correctly out-of-distribution even with FKC ($K=16$ particles). } 
    \label{fig:2d-simple-gaussian-experiment}
\end{figure}

\subsection{Synthetic data: 2D Gaussian}\label{subsec:2d-gaussian}

\begin{table*}[t]
    \centering
    \caption{
        \textbf{The relative magnitudes of inference-time approximation error and score estimation error are highly dependent on problem setting.}
        FKC is applied with $K$ particles to a conditional diffusion model trained on $N$ samples, or to the analytic score functions.
        Each value is the $\mathrm{SW}_2$ between 5000 samples from the target distribution $\bP^w$ and $5000$ empirical samples, averaged over 30 training runs.
    }
    \label{tab:2d-combined_results}

    \begin{subtable}[t]{0.48\textwidth}
        \centering
        \caption{\textbf{Factorized conditionals + ID}}
        \begin{tabular}{@{}l|cccc@{}}
        \toprule
        $K \backslash N$ & 100 & 1000 & 10000 & analytic \\
        \midrule
        $1$ & $0.3537$ & $0.1089$ & $0.0747$ & $0.0354$ \\
        $4$ & $0.4276$ & $0.0974$ & $0.0750$ & $0.0376$ \\
        $16$ & $0.4894$ & $0.0956$ & $0.0772$ & $0.0351$ \\
        $64$ & $0.5136$ & $0.0937$ & $0.0758$ & $0.0356$ \\
        $256$ & $0.5261$ & $0.1010$ & $0.0777$ & $0.0346$ \\
        \bottomrule
        \end{tabular}
    \end{subtable}\hfill
    \begin{subtable}[t]{0.48\textwidth}
        \centering
        \caption{\textbf{Non-factorized conditionals + ID}}
        \begin{tabular}{@{}l|cccc@{}}
        \toprule
        $K \backslash N$ & 100 & 1000 & 10000 & analytic \\
        \midrule
        $1$ & $0.4454$ & $0.1234$ & $0.0957$ & $0.0655$ \\
        $4$ & $0.5331$ & $0.0969$ & $0.0751$ & $0.0412$ \\
        $16$ & $0.5914$ & $0.0901$ & $0.0750$ & $0.0353$ \\
        $64$ & $0.6111$ & $0.0905$ & $0.0740$ & $0.0357$ \\
        $256$ & $0.6216$ & $0.0963$ & $0.0793$ & $0.0333$ \\
        \bottomrule
        \end{tabular}
    \end{subtable}

    \vspace{1em} 
    
    \begin{subtable}[t]{0.48\textwidth}
        \centering
        \caption{\textbf{Factorized conditionals + OOD}}
        \begin{tabular}{@{}l|cccc@{}}
        \toprule
        $K \backslash N$ & 100 & 1000 & 10000 & analytic \\
        \midrule
        $1$ & $0.9658$ & $0.5577$ & $0.3948$ & $0.1122$ \\
        $4$ & $2.0688$ & $1.6885$ & $1.1773$ & $0.1130$ \\
        $16$ & $2.9177$ & $2.7214$ & $1.9699$ & $0.1133$ \\
        $64$ & $3.3210$ & $3.3348$ & $2.4983$ & $0.1124$ \\
        $256$ & $3.5097$ & $3.6426$ & $2.8210$ & $0.1140$ \\
        \bottomrule
        \end{tabular}
    \end{subtable}\hfill
    \begin{subtable}[t]{0.48\textwidth}
        \centering
        \caption{\textbf{Non-factorized conditionals + OOD}}
        \begin{tabular}{@{}l|cccc@{}}
        \toprule
        $K \backslash N$ & 100 & 1000 & 10000 & analytic \\
        \midrule
        $1$ & $1.0957$ & $0.9154$ & $0.8518$ & $0.6375$ \\
        $4$ & $2.1611$ & $1.8306$ & $1.3763$ & $0.1764$ \\
        $16$ & $3.0011$ & $2.8830$ & $2.0117$ & $0.0981$ \\
        $64$ & $3.4224$ & $3.6247$ & $2.5011$ & $0.0831$ \\
        $256$ & $3.6076$ & $3.9971$ & $2.8322$ & $0.0820$ \\
        \bottomrule
        \end{tabular}
    \end{subtable}
    
\end{table*}


We begin with a two-dimensional Gaussian toy setting, where all key quantities have a closed form.
Throughout, we take $P^{a_1}=\mathcal{N}\left(\mathbf{0}, \begin{bsmallmatrix} 10 & 0\\ 0 & 1 \end{bsmallmatrix}\right)$ and $P^{a_2}=\mathcal{N}\left(\mathbf{0}, \begin{bsmallmatrix} 1 & 0\\ 0 & 10 \end{bsmallmatrix}\right)$.
The base distribution $P^0$ varies across four settings, described below. Results are presented in \Cref{tab:2d-combined_results}. 


\textbf{Factorized conditionals.} For the in-distribution setting, we set $P^0=\mathcal{N}\left(\mathbf{0}, \begin{bsmallmatrix} 10 & 0\\ 0 & 10 \end{bsmallmatrix}\right)$, yielding (by \eqref{eq:gaussian-precision}) the target composition $\bP^w = \mathcal{N}\left(\mathbf{0}, \begin{bsmallmatrix} 1 & 0\\ 0 & 1 \end{bsmallmatrix}\right)$. Because this composed distribution lies  within the effective training support of $P^0$, $P^{a_1}$, and $P^{a_2}$, na\"ive denoising is already highly accurate. Consequently, applying FKC yields no improvements, and actively degrades performance for undertrained models. 
For the OOD configuration, we set $P^0 =\mathcal{N}\left(\mathbf{0}, \begin{bsmallmatrix} 1 & 0\\ 0 & 1 \end{bsmallmatrix}\right)$, yielding $\bP^w = \mathcal{N}\left(\mathbf{0}, \begin{bsmallmatrix} 10 & 0\\ 0 & 10 \end{bsmallmatrix}\right)$. 
Although na{\"i}ve denoising achieves a reasonable fit, out-of-distribution estimation error remains 
problematic. When FKC is applied, this error compounds drastically, quickly diverging away from the correct distribution as particle count increases, as shown in \Cref{fig:2d-simple-gaussian-experiment}.

\textbf{Non-factorized conditionals.} For the in-distribution setting, we set $P^0 = \mathcal{N}\left(\mathbf{0}, \begin{bsmallmatrix} 20 & 0\\ 0 & 20 \end{bsmallmatrix}\right)$, yielding $\bP^w = \mathcal{N}\left(\mathbf{0}, \frac{20}{21}I\right)$. 
Here, na{\"i}ve denoising consistently yields poor distributional fit, in line with expectations over approximation error. 
However, FKC successfully leverages well-trained models to correct this approximation error as the particle count increases. 
In the OOD setting, we set $P^0 = \mathcal{N}\left(\mathbf{0}, \begin{bsmallmatrix} 1.1 & 0\\ 0 & 1.1 \end{bsmallmatrix}\right)$, so $\bP^w = \mathcal{N}\left(\mathbf{0}, \frac{110}{21}I\right)$. In this regime, distributional fit remains poor across all non-analytical settings, and FKC strictly worsens performance when applied to learned models. It is only when the algorithm is supplied with the exact analytical scores that it achieves the expected theoretical improvements.

\subsection{Synthetic data: mixture of Gaussians}\label{subsec:gmm}

\begin{table*}[t]
    \centering
    \caption{
        \textbf{The expected trends also hold for Gaussian mixture models.}
        The table follows the same conventions as \Cref{tab:2d-combined_results}.
    }
    \label{tab:score_sw2_combined}
    
    \begin{subtable}[t]{0.48\textwidth}
        \centering
        \caption{\textbf{In-Distribution}}
        \begin{tabular}{@{}l|cccc@{}}
\toprule
$K \backslash N$ & 100 & 1000 & 10000 & analytic \\
\midrule
$1$ & $1.2113$ & $1.2470$ & $1.1704$ & $1.2133$ \\
$4$ & $0.6678$ & $0.2842$ & $0.2318$ & $0.2414$ \\
$16$ & $0.7735$ & $0.2206$ & $0.1454$ & $0.1044$ \\
$64$ & $0.8130$ & $0.2134$ & $0.1478$ & $0.0736$ \\
$256$ & $0.8288$ & $0.1985$ & $0.1350$ & $0.0771$ \\
\bottomrule
\end{tabular}
    \end{subtable}\hfill
    \begin{subtable}[t]{0.48\textwidth}
        \centering
        \caption{\textbf{Out-of-Distribution}}
        \begin{tabular}{@{}l|cccc@{}}
        \toprule
        $K \backslash N$ & 100 & 1000 & 10000 & analytic \\
        \midrule
$1$ & $4.0336$ & $4.0508$ & $3.9606$ & $4.2102$ \\
$4$ & $2.7357$ & $2.7265$ & $2.4675$ & $2.5081$ \\
$16$ & $2.4276$ & $2.4556$ & $2.1718$ & $1.7704$ \\
$64$ & $2.6212$ & $2.7205$ & $2.4005$ & $1.5531$ \\
$256$ & $3.0238$ & $3.0562$ & $2.6706$ & $1.4670$ \\
        \bottomrule
        \end{tabular}
    \end{subtable}
    
\end{table*}



In a slightly more complex synthetic setting, we consider two GMM experiments that both violate the factorized conditionals assumption. In both experiments, $\mathcal{X} = \mathbb{R}^2$ and we define $P^0$, $P^{a_1}$, and $P^{a_2}$ as mixtures of Gaussians, where each component has covariance matrix $\sigma I$.
Because $P^w$ is a ratio of Gaussian mixtures,
its normalizing constant is analytically intractable and the
resulting density is not itself a GMM.
We therefore obtain ground-truth samples via rejection sampling for in-distribution setting and importance sampling for out-of-distribution setting (\ref{appendix:gmm-exp}). Results are provided in Table~\ref{tab:score_sw2_combined}.


For the in-distribution setting, component means are arranged on a $2 \times 3$ grid,
$\mu_k \in \{-3, 0, 3\} \times \{-2.5, 2.5\}$, and we set $\sigma = 0.6$.
The base $P^0$ assigns uniform weight to all six components.
Conditions restrict to overlapping subsets:
$P^{a_1}$ is uniform over $\{k_0, k_1, k_2, k_3\}$,
and $P^{a_2}$ is uniform over $\{k_1, k_2, k_4, k_5\}$. This configuration is explicitly constructed to isolate approximation error: the unnormalized target density $P^{a_1}(x)P^{a_2}(x) / P^0(x)$ is confined to regions with high probability mass under the conditionals. By ensuring the composed target modes are well-represented in the training data, score estimation error is minimized. In this regime, FKC successfully corrects for the non-factorized nature of the composition, aligning the empirical distributions with the ground truth. Only for poorly learned models ($N=100$) and at large particle counts does further increasing the number of particles begin to harm performance.  

For the out-of-distribution setting, we design the modes so the
target distribution concentrates at locations that are low-probability under all conditionals. 
$P^{a_1}$ is a two-component GMM with means $\{(-6,1.5),(0,0)\}$,
$P^{a_2}$ has means $\{(0,-9),(-1.5,6)\}$,
and $P^0$ has all four means with uniform weight,
all with $\sigma = 1$. While small increases in $K$ yield initial improvements by correcting for approximation error, further scaling $K$ worsens results due to accumulating estimation error; increasing the number of particles is only consistently helpful with oracle scores.

\subsection{Semi-synthetic data: Objects in a room}

\begin{figure}[!ht]
    \centering
    \includegraphics[width=0.99\linewidth]{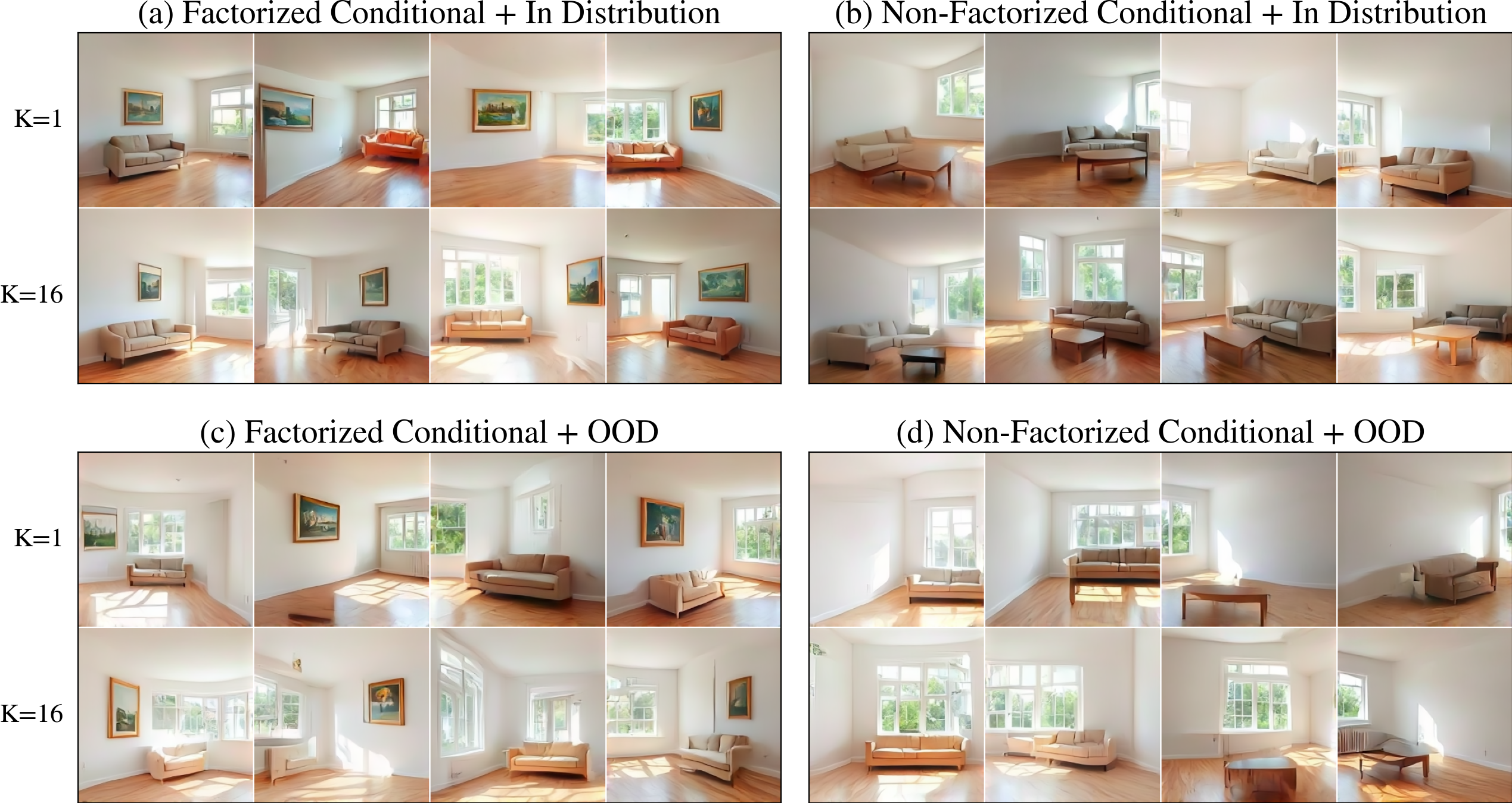}
    \caption{\textbf{Couches and paintings ($\sim$factorized) are largely combined accurately, while couches and tables (non-factorized) are not; FKC is only effective when the target composition is ID.}
    Illustrative samples from base-composed distributions across four experimental settings, comparing na{\"i}ve denoising ($K=1$ particles) against FKC ($K=16$).
    In \textbf{(a)} and \textbf{(c)}, the target distribution is highly concentrated on empty rooms with exactly one couch \textit{and} one painting; in \textbf{(b)} and \textbf{(d)}, it is concentrated on rooms with exactly one couch \textit{and} one coffee table. 
    Underlying conditional mixtures are altered to render the target composition ID or OOD.
    \textbf{(a)} Na\"ive denoising generates couches \textit{and} paintings correctly but occasionally omits one; $K=16$ causes both objects to appear 
    by correcting for mild non-factorization-induced approximation error. \textbf{(b)} Na\"ive frequently fails to generate both a couch \textit{and} table and often warps results; $K=16$ corrects for non-factorization, generating both. \textbf{(c)} Na\"ive samples occasionally contain both objects, but they frequently appear alone; with $K=16$ results are similar but tend to be more warped as OOD estimation error accumulates. \textbf{(d)} The most challenging setting; couches and tables never appear simultaneously and warping is severe.
    }
    \label{fig:room-grid}
\end{figure}

To evaluate these dynamics on more realistic, higher-dimensional visual data, we create a semi-synthetic dataset consisting of realistic room images generated via a text-to-image model. The dataset is partitioned into discrete underlying classes: empty rooms, rooms containing a single object, and rooms containing exactly two objects.
Each base and conditional distribution is a mixture of images from four underlying classes: ``empty,'' ``couch,'' a second object ``$X$,'' and ``couch + $X$.'' We alter object $X$ to either ``framed painting'' or ``coffee table'' to control the factorization of the distributions. When $X$ is a framed painting, the conditionals are approximately factorized, as couches and paintings typically occupy disjoint spatial regions in a room (floor versus wall). Conversely, when $X$ is a coffee table, factorization breaks, as both objects compete for overlapping spatial support on the floor. 

We define the ID and OOD settings by adjusting the probability weights across the four underlying classes, represented as the vector $(P_{\text{empty}}, P_{\text{couch}}, P_X, P_{\text{couch}+X})$. 
In the in-distribution setting, $P^0$ is a uniform mixture $(0.25, 0.25, 0.25, 0.25)$, and the perturbation distributions are heavily biased toward the presence of their respective objects: $P^{a_1}$ is set to $(\alpha, 0.5-\alpha, \alpha, 0.5-\alpha)$ and $P^{a_2}$ is set to $(\alpha, \alpha, 0.5-\alpha, 0.5-\alpha)$, where we use a small smoothing factor $\alpha = 0.01$ to ensure the composition remains well-defined. Under this construction, the composed target distribution places nearly all of its probability mass on the ``couch + $X$'' class, which is well-supported within each condition's training data.
In the OOD setting, the base distribution is defined as $(1.0-3\beta, \beta, \beta, \beta)$ with $\beta = 0.05/3$, heavily skewing it toward the ``empty'' class. $P^{a_1}$ and $P^{a_2}$ are identically skewed toward the isolated ``couch'' and ``$X$'' classes, respectively. As a result, the composed target still concentrates almost entirely on the ``couch + $X$'' class, which is \textit{not} well-supported within each condition's training data.

Illustrative samples from the resulting composed distributions are provided in Figure~\ref{fig:room-grid}. Additional samples are given in \Cref{appendix:room-pics}, and experimental details are given in \Cref{appendix:exp-details-room-objects}.


\section{Discussion}\label{sec:discussion}

In this work, we investigated the limits of compositional generation, presenting a critical negative result for the field: vanilla conditional diffusion models fundamentally struggle to compose when the target distribution is out-of-distribution with respect to the source distributions. 
To conclude, we highlight how this finding should inform future research, along with limitations of our setup.


\textbf{Key takeaways.} 
Crucially, our findings pertain to ``vanilla" diffusion models --- networks trained with standard objectives and shared parameters, utilizing strictly inference-time composition procedures.
Hence, our results suggest that achieving better compositional generation will require interventions earlier in the pipeline, e.g. by training on architectures that learn latent representations, including latent diffusion models \citep{rombach2022high,podell2023sdxl} and, more aptly, models that are explicitly designed for compositional generation, such as object-centric or causal generative models \citep{wu2023slotdiffusion,jiang2023object,komanduri2023identifiable}, or by specialized post-training procedures that encourage the model to extrapolate from high-density regions to low-density ones.

\textbf{Limitations.} We reiterate that our experimental findings only pertain to ``vanilla" (conditional) diffusion models, and are only expected to hold for other ``vanilla" conditional generative models, which requires empirical validation.
In our experiments, we only considered base-composed distributions: a more in-depth study across different weightings may provide deeper insights.
Finally, by focusing only on a toy setting that is exactly analytically solvable, our theoretical results are highly specialized, leaving open fundamental questions about the effect of score estimation error in more general settings.


\section*{Acknowledgements}
This research was developed with funding from the Defense Advanced Research Projects Agency (DARPA) via HR0011-25-3-0239, FA8750-23-2-1015, ONR via N00014-23-1-2368, and NSF via IIS-1909816. JH is supported by the Centre for AI Fundamentals and the UKRI GenAI Hub.

\bibliographystyle{plainnat}
\bibliography{bib}


\crefalias{section}{appendix}
\crefalias{subsection}{appendix}
{
\hypersetup{linkcolor=blue}
\renewcommand{\contentsname}{Contents of Appendix}
\tableofcontents
}
\addtocontents{toc}
{\protect\setcounter{tocdepth}{2}}
\newpage
\appendix
\thispagestyle{empty}

\section{Formalization of density-related assumptions}\label{appendix:formal-assumption}

Our density-related assumptions in \Cref{sec:setup} can be more formally stated as follows:

\begin{assumption}
    There exists some base measure $\mu$ on $\cX$, with $\textnormal{supp}(\mu) = \cX$, such that $P_a \ll \mu$ and $\mu \ll P_a$ for all $a \in \cA_o$, where $\ll$ denotes the absolute continuity relation.
\end{assumption}
First, by the Radon-Nikodym theorem, the condition that $P^a << \mu$ ensures that each distribution $P^a$ has a density with respect to $\mu$, and that these densities are uniquely defined $\mu$-almost everywhere.
Second, this condition implies that $P^a(x) > 0$ for $\mu$-almost all $x \in \cX$.
Hence, our equality statements should be formally interpretated as holding over this \emph{equivalence class} of densities.


\section{Weighted composition: A causal representation learning perspective}\label{sec:perspectives}

In causal representation learning, one often considers \emph{causal representation models} as an inductive bias for compositional generalization.
In particular, one can define a \defword{causal representation model on $\cX$} as a tuple $M = (S, g)$, where $S$ is a \textit{structural causal model} on $\cZ = \bbR^d$, and $g\colon \cZ \to \cX$ is a diffeomorphism onto its image (see \citet[Definition 10.7]{varici2026causal}).
Most importantly for our purposes, the \defword{latent-space observational distribution} of $M = (S, g)$ has the form
\begin{equation*}
    Q^\obs(z) \defeq \prod_{i=1}^d Q^\obs(z_i \mid \pa_\cG(z_i)),
\end{equation*}
where $\cG$ is the \emph{causal DAG} associated with the structural causal model $S$, and the \defword{feature-space observational distribution} of $M$ is 
\begin{equation*}
    P^\obs(x) = Q^\obs(g^{-1}(x)) \cdot |\det J_{g^{-1}}(x) |.
\end{equation*}
Then, an \defword{intervention} $I$ on $M$, with \defword{targets} $T(I) \subseteq [d]$, is a collection of conditional distributions $( Q^I(z_i \mid \pa_\cG(z_i) )_{j \in T(I)}$, giving the \defword{latent-space interventional distribution}
\begin{equation*}
    Q^I(z) \defeq \prod_{i=1}^d Q^\obs(z_i \mid \pa_\cG(z_i))^{\kron_{i \not\in T(I)}} \cdot Q^I(z_i \mid \pa_\cG(z_i))^{\kron_{i \in T(I)}},
\end{equation*}
and the \defword{feature-space interventional distribution} $P^I(x) = Q^I(g^{-1}(x)) \cdot |\det J_{g^{-1}}(x) |$.
From these definitions, we see that
\begin{equation*}
    \frac{Q^I(z)}{Q^\obs(z)}
    =
    \prod_{i \in T(I)} \frac{Q^I(z_i \mid \pa_\cG(z_i))}{Q^\obs(z_i \mid \pa_\cG(z_i))},
\end{equation*}
and, since the Jacobian term cancels out,
\begin{equation}\label{eqn:density-ratio}
    \frac{P^I(x)}{P^\obs(x)}
    =
    \prod_{i \in T(I)} \frac{P^I(z_i \mid \pa_\cG(z_i))}{P^\obs(z_i \mid \pa_\cG(z_i))},
    \quad\text{where}\ z = g^{-1}(x).
\end{equation}

\paragraph{Perturbations as unknown-target interventions}
In \emph{interventional causal representation learning}, one typically assumes that each \emph{single perturbation} $a = \be_k$ for $k \in [K]$ corresponds to an intervention $I_a$ with unknown targets \citep{varici2024linear}, though possibly with restrictions on the size of the intervention \citep{ahuja2023interventional,squires2023linear,buchholz2023learning,zhang2023identifiability,varici2024general,varici2025score}.
In particular, to learn the model $M$ from such data, one must also learn a map $\rho\colon a \mapsto I_a$, mapping each perturbation to its representation as an intervention.

Representing \emph{double perturbations} as vectors $\be_{kk'} \defeq \be_k + \be_{k'}$ for $k \neq k'$, (a form of) the principle of \emph{independent causal mechanisms (ICM)} states that the map $\rho$ has a certain modularity property \citep{muller2021learning}.
In particular, if $\be_k$ and $\be_{k'}$ affect different causal mechanisms (i.e., $T(I_{\be_k}) \cap T(I_{\be_{k'}}) = \varnothing$), then the ICM principle encourages one to assume that $\rho(\be_{kk'}) = I_{\be_k} \cup I_{\be_{k'}}$, i.e., that the corresponding intervention $I_{\be_{kk'}}$ \emph{combines} the interventions $I_{\be_k} = ( Q^{I_{\be_k}}(z_i \mid \pa_\cG(z_i) )_{j \in T(I_{\be_k})}$ and $I_{\be_{k'}} = Q^{I_{\be_{k'}}}(z_i \mid \pa_\cG(z_i) )_{j \in T(I_{\be_{k'}})}$, without any changes to the interventional mechanisms.

\paragraph{Weighted composition}
For consistency with Section 2, let $P^a \defeq P^{I_a}$.
By \Cref{eqn:density-ratio} and the principle of independent mechanisms, we obtain 
\begin{align*}
    \frac{P^{\be_{kk'}}(x)}{P^\obs(x)}
    &=
    \prod_{i \in T\left(I_{\be_{kk'}}\right)} \frac{P^{\be_{kk'}}(z_i \mid \pa_\cG(z_i))}{P^\obs(z_i \mid \pa_\cG(z_i))},
    \tag{\Cref{eqn:density-ratio}}
    \\
    &=
    \left( \prod_{i \in T\left(I_{\be_{k}}\right)} \frac{P^{\be_{k}}(z_i \mid \pa_\cG(z_i))}{P^\obs(z_i \mid \pa_\cG(z_i))} \right)
    \cdot 
    \left( \prod_{i \in T\left(I_{\be_{k'}}\right)} \frac{P^{\be_{k'}}(z_i \mid \pa_\cG(z_i))}{P^\obs(z_i \mid \pa_\cG(z_i))} \right)
    \tag{ICM}
    \\
    &=
    \frac{P^{\be_k}(x)}{P^\obs(x)}
    \cdot
    \frac{P^{\be_{k'}}(x)}{P^\obs(x)},
    \tag{\Cref{eqn:density-ratio}}
\end{align*}
where we have used $z = g^{-1}(x)$ throughout.
Thus, rearranging and assuming $P^\bzero = P^\obs$, we obtain the desired composition: $P^{\be_{kk'}}(x) = P^{\be_k}(x) \cdot P^{\be_{k'}}(x) \cdot (P^{\bzero}(x))^{-1}$.
This argument naturally extends to combinations of \emph{several} perturbations with non-overlapping intervention targets.

Notably, such interventional distributions are generalizations of the \emph{Factorized Conditionals} condition from \citep{bradley2025mechanisms}: the assumption of mutually independent sets of variables corresponds to \emph{disconnected} groups of nodes in a causal graph, giving their Equation (7) as a special case where each intervention targets one of these disconnected groups of nodes.





\section{Feynman-Kac correctors}
\label{appendix:fkc}
In this section, we present the theoretical intuition behind Feynman-Kac Correctors (FKC) \citep{skreta2025feynman}.

Although the naïve denoising SDE uses a \emph{score} that matches $S^w_t$ at $t=0$, its marginal at $t=0$ is not $\bP^w$: the particle position at the endpoint depends on integrated dynamics over the entire trajectory, and $S^{w,\naive}_t$ is not correct at $t>0$. FKC addresses this inference-time approximation error by maintaining $K$ weighted particles $\{(x^{(k)}_t, \omega^{(k)}_t)\}_{k=1}^K$, with $\omega^{(k)}_t$ the log importance weight of particle $k$, such that the weighted empirical \emph{distribution} tracks a target along the entire denoising trajectory.

The natural target $\bP^w_t = \Noise_t(\bP^w)$ has an intractable score $S^w_t$, so FKC tracks instead the weighted composition of the noised marginals, $\Comp_w(\{P^a_t\}_a)$, whose score is the naïvely-composed $S^{w,\naive}_t$. This proxy agrees with $\bP^w_t$ at $t=0$ ($\Noise_0$ is the identity) but diverges from it at intermediate $t$, since noising and composition do not commute. Sampling correctness thus holds, at $t=0$ the proxy equals $\bP^w$ in distribution, so the weighted particles approximate $\bP^w$.

Determining the weight update to achieve this follows from comparing how the proposal $\tP^{w,\naive}_t$ and the proxy $\Comp_w(\{P^a_t\}_a)$ evolve in time. Both satisfy Fokker-Planck-type equations involving the drift divergence, score, and Laplacian of the log density. Subtracting one from the other, they leave a residual $g_t$ that the log-weights along a particle trajectory must account for:
\begin{equation}
    \d \omega_t(x_t)
    \;=\;
    \bar{g}_t(x_t)\,\d t,
    \qquad
    \bar{g}_t(x) \;:=\; g_t(x) - \mathbb{E}_{X_t \sim \Comp_w(\{P_t^a\}_a)}[g_t(X_t)],
    \label{eq:log-weight-update}
\end{equation}
where the centering enforces $\int \bar{g}_t(x)\,P^w_t(x)\,\d x = 0$,
preventing uniform divergence or collapse of all log-weights simultaneously.
For weighted compositions, $g_t$ is given in closed form by:

\begin{equation}
    g_t(x)
    \;=\;
    \left(1-\sum\nolimits_{a \in \cA_o}w_a \right) \langle \nabla_\xsym, u_t(x)\rangle 
    + 
    \frac{\sigma_t^2}{2}\!\left(
        \sum\nolimits_{a \in \Aobs} w_a\,\|S^a_t(x)\|^2
        -\|S^{w,\naive}_t(x)\|^2
    \right).
    \label{eq:g-sde}
\end{equation}

To utilize these weights $\omega_t^{(k)}$ to perform corrections, a Sequential Monte Carlo method is employed. During denoising, resampling is performed at each step based on the increment $\omega_t^{(k)}=g_t(x_t^{(k)})$ via systematic resampling proportional to $\exp\left(\omega_t^{(k)}\right)$ \citep{1521264}.
Note that when $K=1$, the ensemble consists of only a single particle, so resampling has no effect and running FKC is equivalent to running the na\"ive denoising SDE.

We also note how as an additional nicety of base-composed distributions \eqref{eq:base-composed-distribution}, the divergence term in \eqref{eq:g-sde} cancels to $0$ and can be ignored even for non-linear drifts. (For linear drifts, it is constant and can be ignored regardless).


The FKC framework extends naturally to denoising ODEs, with the same $g_t$ as in the SDE case. However, applying the instantiation of the FKC framework as described with Sequential Monte Carlo rejection sampling is less well-motivated for ODEs because their trajectories are deterministic. Because the reverse ODE lacks noise injection from the Wiener process, particle trajectories never stochastically diverge. When FKC resampling duplicates a high-weight particle, those identical particles will follow the exact same deterministic trajectory for the remainder of the reverse process. This means that resampling monotonically decreases the diversity of the particle swarm. Despite this limitation, if the continuity-equation residual is large, FKC can remain mathematically helpful to route the probability mass to the correct target coordinates.
\section{Proofs}\label{appendix:proofs}

\subsection{Proof of \texorpdfstring{\Cref{theorem:gaussian-integrability}}{Theorem~\ref{theorem:gaussian-integrability}}}\label{appendix:proof-gaussian-integrability}
\newtheorem{corollary}{Corollary}[theorem]
\begin{proof}
    Let $x \in \mathbb{R}^d$. The probability density function for each zero-mean multivariate Gaussian $P^a$ is given by:
    \[
        P^a(x) = \frac{1}{\sqrt{(2\pi)^d |\Sigma_a|}} \exp\left( -\frac{1}{2} x^\top \Sigma_a^{-1} x \right)
    \] 

    \noindent
    We wish to evaluate $f(x)$, the product of these densities raised to their respective weights $w_a$:
    \begin{align*}
        f(x) &= \prod_{a \in \Aobs} P^a(x)^{w_a}\\
        &= \prod_{a \in \Aobs} \left( (2\pi)^{-d/2} |\Sigma_a|^{-1/2} \exp\left( -\frac{1}{2} x^\top \Sigma_a^{-1} x \right) \right)^{w_a} \\
        &= \left( \prod_{a \in \Aobs} (2\pi)^{-d w_a / 2} |\Sigma_a|^{-w_a / 2} \right) \exp\left( -\frac{1}{2} \sum_{a \in \Aobs} w_a x^\top \Sigma_a^{-1} x \right)\\
        &= \left( \prod_{a \in \Aobs} (2\pi)^{-d w_a / 2} |\Sigma_a|^{-w_a / 2} \right) \exp\left( -\frac{1}{2} x^\top \left( \sum_{a \in \Aobs} w_a \Sigma_a^{-1} \right) x \right)
    \end{align*}

    \noindent
    Let $C$ be the scaling constant $C = \prod_{a \in \Aobs} (2\pi)^{-d w_a / 2} |\Sigma_a|^{-w_a / 2}$, and let $M = \sum_{a \in \Aobs} w_a \Sigma_a^{-1}$:
    \[
    f(x) = C \exp\left( -\frac{1}{2} x^\top M x \right)
    \]
    We can recognize $f(x) \propto \exp\left( -\frac{1}{2} x^\top M x \right)$ as the kernel of a multivariate Gaussian distribution with a mean of $0$ and precision $M$, which is well-known to be integrable if and only if $M \succ 0$ (that is, $M$ is positive definite). Therefore, assuming $M\succ 0$, normalizing $f(x)$ yields
    $P^w(x) = \cN(0, \Sigma^w)$ for 
    \begin{equation*}
        \Sigma^w = \left( \sum_{a \in \Aobs} w_a \Sigma_a^{-1} \right)^{-1}
    \end{equation*}
\end{proof}

\subsection{Proof of \texorpdfstring{\Cref{thm:commutative-gaussian-ode}}{Theorem~\ref{thm:commutative-gaussian-ode}}}\label{appendix:proof-commutative-gaussian-ode}

We first begin by proving a necessary Lemma.
\begin{lemma}[Log-derivative of a noised Gaussian covariance]\label{lem:log-deriv}
For $P^a = \cN(0, \Sigma_a)$, the noising marginal is $P^a_t = \cN(0, \Sigma^a_t)$ with $\Sigma^a_t = \alpha_t^2 \Sigma_a + \gamma_t^2 I$, and
\begin{equation*}
    \frac{d}{dt} \ln \Sigma^a_t = 2\mu_t I + \sigma_t^2 (\Sigma^a_t)^{-1}.
\end{equation*}
\end{lemma}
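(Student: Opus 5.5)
We split the statement into two parts: first identify the noised marginal, then differentiate its matrix logarithm. For the first part we use the noising kernel \eqref{eq:noise}. If $x_0 \sim \cN(0,\Sigma_a)$ and $x_t \mid x_0 \sim \cN(\alpha_t x_0, \gamma_t^2 I)$, then $x_t = \alpha_t x_0 + \gamma_t \xi$ with $\xi \sim \cN(0,I)$ independent of $x_0$. Hence $x_t$ is Gaussian with mean zero and covariance $\Sigma^a_t = \alpha_t^2\Sigma_a + \gamma_t^2 I$. This is a routine linear-Gaussian computation, and $\Sigma^a_t \succ 0$ for $t>0$ (and for $t=0$ since $\Sigma_a\succ 0$), so $\ln \Sigma^a_t$ is well defined.

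Next we compute $\frac{d}{dt}\Sigma^a_t$. From the definitions, $\dot\alpha_t = \mu_t\alpha_t$, so $\frac{d}{dt}\alpha_t^2 = 2\mu_t\alpha_t^2$. Writing $\gamma_t^2 = \alpha_t^2\int_0^t \sigma_s^2/\alpha_s^2\,\d s$ gives $\frac{d}{dt}\gamma_t^2 = 2\mu_t\gamma_t^2 + \sigma_t^2$. Therefore
\[
\tfrac{d}{dt}\Sigma^a_t = 2\mu_t(\alpha_t^2\Sigma_a + \gamma_t^2 I) + \sigma_t^2 I = 2\mu_t\Sigma^a_t + \sigma_t^2 I .
\]
This is the usual Lyapunov-type ODE for the covariance of a linear SDE.

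The remaining step is passing from $\frac{d}{dt}\Sigma^a_t$ to $\frac{d}{dt}\ln\Sigma^a_t$, and this is the only subtle point. For general matrix paths, $\frac{d}{dt}\ln A(t) \neq A^{-1}\dot A$. Here, however, $\Sigma^a_t$ lies for all $t$ in the commutative algebra generated by $\Sigma_a$ and $I$. We therefore diagonalize $\Sigma_a = U\Lambda U^\top$ with a fixed, time-independent orthogonal $U$. Then $\Sigma^a_t = U(\alpha_t^2\Lambda + \gamma_t^2 I)U^\top$ and $\ln\Sigma^a_t = U\,\mathrm{diag}\bigl(\ln(\alpha_t^2\lambda_i + \gamma_t^2)\bigr)U^\top$. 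Differentiating entrywise reduces the problem to the scalar identity $\frac{d}{dt}\ln s_i(t) = \dot s_i/s_i$, where $s_i = \alpha_t^2\lambda_i+\gamma_t^2$ and $\dot s_i = 2\mu_t s_i + \sigma_t^2$. This yields $2\mu_t + \sigma_t^2/s_i$ on the diagonal, and reassembling gives $2\mu_t I + \sigma_t^2(\Sigma^a_t)^{-1}$.

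A noncommutative matrix-log derivative would be the main obstacle, but the shared eigenbasis removes it. We expect this same shared-eigenbasis structure, across all $a$ under the mutual commutativity assumption, to be what the proof of \Cref{thm:commutative-gaussian-ode} exploits afterward.
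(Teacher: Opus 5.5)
Your proposal is correct and follows essentially the same route as the paper: you compute $\dot\Sigma^a_t = 2\mu_t\Sigma^a_t + \sigma_t^2 I$ from the definitions of $\alpha_t$ and $\gamma_t$, then use that $\Sigma^a_t$ stays in the commutative algebra generated by $\Sigma_a$ and $I$ to conclude $\frac{d}{dt}\ln\Sigma^a_t = \dot\Sigma^a_t(\Sigma^a_t)^{-1}$. The only difference is cosmetic: the paper justifies that last step by noting that $\Sigma^a_t$ and $\dot\Sigma^a_t$ commute, whereas you make it explicit with a fixed, time-independent eigenbasis of $\Sigma_a$, which reduces the matrix identity to scalar ones.
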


\begin{proof}
The first claim follows directly from the noising kernel $\Noise_t(x_0) = \cN(\alpha_t x_0; \gamma_t^2 I)$. For the second, both $\Sigma^a_t$ and
\begin{equation*}
    \dot\Sigma^a_t = 2\alpha_t \dot\alpha_t \Sigma_a + 2\gamma_t \dot\gamma_t I
\end{equation*}
are linear combinations of $\Sigma_a$ and $I$, so they commute, giving $\frac{d}{dt} \ln \Sigma^a_t = \dot\Sigma^a_t (\Sigma^a_t)^{-1}$. Using $\mu_t = \dot\alpha_t / \alpha_t$ and $\sigma_t^2 = 2\gamma_t \dot\gamma_t - 2\mu_t \gamma_t^2$,
\begin{equation*}
    \dot\Sigma^a_t = 2\mu_t (\alpha_t^2 \Sigma_a + \gamma_t^2 I) + \sigma_t^2 I = 2\mu_t \Sigma^a_t + \sigma_t^2 I.
\end{equation*}
Right-multiplying by $(\Sigma^a_t)^{-1}$ gives the result.
\end{proof}

Now we begin the proof of \Cref{thm:commutative-gaussian-ode}.
\begin{proof}
Since each $P^a$ is mean-zero Gaussian, $S^a_t(x) = -(\Sigma^a_t)^{-1} x$, so the na\"ively composed score is linear in $x$:
\begin{equation*}
    S^{w,\naive}_t(x) = \sum_{a \in \Aobs} w_a S^a_t(x) = -J^{w,\naive}_t \, x, \qquad J^{w,\naive}_t \defeq \sum_{a \in \Aobs} w_a (\Sigma^a_t)^{-1}.
\end{equation*}
Combined with $u_t(x) = \mu_t x$, the na\"ive denoising velocity field is also linear in $x$:
\begin{equation*}
    \tilde{v}^{w,\naive}_t(x) = A_t \, x, \qquad A_t \defeq \mu_t I + \tfrac{1}{2} \sigma_t^2 J^{w,\naive}_t,
\end{equation*}
with $A_t$ symmetric. The na\"ive denoising ODE \eqref{eqn:naive-denoising-ode}, integrated from $t = 1$ to $t = 0$, is therefore
\begin{equation*}
    \frac{dx_t}{dt} = A_t x_t, \qquad x_1 \sim \cN(0, I),
\end{equation*}
where the initial distribution reflects $\bP^w_1 = \cN(0, I)$.

\paragraph{Commutativity.} Note that every matrix appearing in this proof, $\Sigma^a_t$, $(\Sigma^a_t)^{-1}$, $A_t$, $\Sigma_t$, is built from $\{\Sigma_a\}_{a \in \Aobs}$ and $I$ by sums, products, inverses, and (for $\Sigma_t$) matrix exponentials. Since the $\Sigma_a$ pairwise commute by hypothesis, any two such matrices commute. In particular, $\Sigma_t$ commutes with $A_t$ and with $\dot\Sigma_t$, which justifies all matrix manipulations below and the usage of the identity $\frac{d}{dt}\ln \Sigma_t = \dot \Sigma_t \Sigma_t^{-1}$ previously.

\paragraph{Covariance evolution.} The covariance satisfies $\dot\Sigma_t = A_t \Sigma_t + \Sigma_t A_t^\top = 2 A_t \Sigma_t$ \citep[Equation~(6.2)]{Sarkka_Solin_2019}, so
\begin{equation*}
    \frac{d}{dt} \ln \Sigma_t = 2 A_t = 2\mu_t I + \sigma_t^2 \sum_{a \in \Aobs} w_a (\Sigma^a_t)^{-1}.
\end{equation*}
Using $\sum_{a \in \Aobs} w_a = 1$ to absorb the $2\mu_t I$ term into the sum, then applying Lemma~\ref{lem:log-deriv} termwise,
\begin{equation*}
    \frac{d}{dt} \ln \Sigma_t = \sum_{a \in \Aobs} w_a \bigl[ 2\mu_t I + \sigma_t^2 (\Sigma^a_t)^{-1} \bigr] = \sum_{a \in \Aobs} w_a \, \frac{d}{dt} \ln \Sigma^a_t.
\end{equation*}

\paragraph{Integration and boundary conditions.} Integrating from $t = 1$ to $t = 0$,
\begin{equation*}
    \ln \Sigma_0 - \ln \Sigma_1 = \sum_{a \in \Aobs} w_a \bigl( \ln \Sigma^a_0 - \ln \Sigma^a_1 \bigr).
\end{equation*}
The boundary values $\alpha_0 = 1, \gamma_0 = 0$ give $\Sigma^a_0 = \Sigma_a$, and $\alpha_1 = 0, \gamma_1 = 1$ give $\Sigma^a_1 = \Sigma_1 = I$, so $\ln \Sigma^a_1 = \ln \Sigma_1 = 0$. Hence
\begin{equation*}
    \ln \Sigma_0 = \sum_{a \in \Aobs} w_a \ln \Sigma_a,
    \qquad
    \Sigma_0 = \exp\!\Bigl( \sum_{a \in \Aobs} w_a \ln \Sigma_a \Bigr) = \prod_{a \in \Aobs} \Sigma_a^{w_a},
\end{equation*}
where the last equality uses that for commuting symmetric positive-definite matrices, $\exp(\ln A + \ln B) = AB$. 

\paragraph{Mean.} The mean $m_t$ satisfies $\dot m_t = A_t m_t$ with $m_1 = 0$, so $m_t \equiv 0$. Thus, $\tP^{w,\nvo} = \cN\bigl( 0, \prod_{a \in \Aobs} \Sigma_a^{w_a} \bigr)$.
\end{proof}

\subsection{Proof of \texorpdfstring{\Cref{thm:fkc-factorized-conditional}}{Theorem~\ref{thm:fkc-factorized-conditional}} (Feynman-Kac correctors under factorized conditionals)}\label{appendix:g-is-zero-in-FC}

To begin, we first prove a simple lemma that allows us to not worry about the time $t$.
\begin{lemma}[Coordinate-wise noising preserves factorization]
\label{lem:factorization-preserved}
If $P_0$ on $\mathbb{R}^n$ factorizes over a partition $[n] = \bigsqcup_\ell B_\ell$ as $P_0(x) = \prod_\ell P_0(x_{B_\ell})$, and the noising SDE acts coordinate-wise, then for every $t \in [0,1]$ the noising marginal $P_t$ factorizes over the same partition: $P_t(x) = \prod_\ell P_t(x_{B_\ell})$.
\end{lemma}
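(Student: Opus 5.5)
We use the fact that the noising kernel from \eqref{eq:noise} is a product kernel across coordinates. Since $u_t(x) = \mu_t x$ and the diffusion is $\sigma_t \d W_t$ with $W_t$ a standard Wiener process, we have $\Noise_t(x_0) = \cN(\alpha_t x_0; \gamma_t^2 I)$. Its density $k_t(x \mid x_0)$ therefore factorizes over any partition of coordinates:
\begin{equation*}
    k_t(x \mid x_0) = \prod_\ell k^{B_\ell}_t(x_{B_\ell} \mid x_{0,B_\ell}),
\end{equation*}
where $k^{B_\ell}_t(\cdot \mid x_{0,B_\ell}) = \cN(\alpha_t x_{0,B_\ell}; \gamma_t^2 I_{|B_\ell|})$. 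This holds because the covariance $\gamma_t^2 I$ is block diagonal, so the Gaussian density is a product of block densities. More generally, ``acts coordinate-wise'' is exactly what ensures this product structure of the transition kernel. Only this property is needed, so the argument also covers non-isotropic but coordinate-wise schedules.

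Next, write the marginal as $P_t(x) = \int k_t(x \mid x_0) P_0(x_0) \d x_0$ and substitute both factorizations. The integrand becomes $\prod_\ell k^{B_\ell}_t(x_{B_\ell} \mid x_{0,B_\ell}) P_0(x_{0,B_\ell})$. Since the blocks $x_{0,B_\ell}$ are disjoint groups of integration variables, Fubini--Tonelli lets us split the integral into a product of integrals, one per block:
\begin{equation*}
    P_t(x) = \prod_\ell \int k^{B_\ell}_t(x_{B_\ell} \mid y) P_0(y) \d y.
\end{equation*}
The integrand is nonnegative, so Tonelli applies without further conditions.

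Finally, we identify each factor with the block marginal $P_t(x_{B_\ell})$. Integrating the full product over $x_{B_{\ell'}}$ for $\ell' \ne \ell$ kills the other factors, because each is a probability density in its own block. What remains is exactly the marginal of $P_t$ on block $B_\ell$, and so $P_t(x) = \prod_\ell P_t(x_{B_\ell})$ for every $t \in [0,1]$. The endpoint cases need a brief remark. At $t=0$ the kernel is a point mass and the claim is trivial. At $t=1$ (with $\alpha_1 = 0$) the factors are Gaussians, which still factorize.

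The main obstacle is not any calculation but justifying the kernel factorization from the hypothesis ``the noising SDE acts coordinate-wise.'' I would make this precise as follows. The drift and diffusion coefficients for coordinate $i$ depend only on $x_i$, and the driving noise has independent coordinates. Then the SDE decouples into independent one-dimensional SDEs, and the product structure of the transition kernel follows from independence of the Brownian components. For the paper's linear SDE this is immediate from the explicit kernel \eqref{eq:noise}.
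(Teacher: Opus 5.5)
Your proof is correct and follows the same idea as the paper's proof: the noising kernel factorizes over the blocks and so does the initial law, hence so does the marginal. The paper argues at the level of random variables instead: $(X_t)_{B_\ell} = \alpha_t (X_0)_{B_\ell} + \gamma_t Z_{B_\ell}$ are functions of the mutually independent pairs $\bigl((X_0)_{B_\ell}, Z_{B_\ell}\bigr)$, so they are independent. That formulation makes your Tonelli computation, the identification of each factor as a block marginal, and the separate endpoint remarks unnecessary, and it does not even require densities.
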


\begin{proof}
The coordinate-wise noising kernel satisfies $(\Noise_t(x_0))_{B_\ell} = \alpha_t (x_0)_{B_\ell} + \gamma_t Z_{B_\ell}$ with $Z_{B_\ell} \sim \cN(0, I_{|B_\ell|})$ independent across $\ell$.
If $X_0 \sim P_0$ has independent blocks by hypothesis, then the noised blocks $(X_t)_{B_\ell}$ are functions of disjoint independent inputs and remain mutually independent, so the joint density of $X_t$ factorizes block-wise.
\end{proof}

Now, we prove \Cref{thm:fkc-factorized-conditional}.
\begin{proof}
For convenience, let $\Aobs^- \defeq \Aobs \setminus \{0\}$, and let the distributions $\{P^0\} \cup \{P^a\}_{a \in \Aobs}$ on $\mathbb{R}^n$ denote the Factorized Conditional distributions.

Let $k \defeq |\Aobs^-|$. The weights satisfy
$w_0 + \sum_{a \in \Aobs^-} w_a = (1-k) + k = 1$, so the divergence term in
\eqref{eq:g-sde} vanishes. It remains to show
\begin{equation}\label{eq:fkc-residual}
    \Big\| w_0 S^0_t + \sum_{a \in \Aobs^-} w_a S^a_t \Big\|^2
    \;-\; \Big( w_0 \|S^0_t\|^2 + \sum_{a \in \Aobs^-} w_a \|S^a_t\|^2 \Big)
    \;=\; 0.
\end{equation}

\paragraph{Block decomposition of scores.}
The noising operator $\Noise_t$ \eqref{eq:noise} acts coordinate-wise because $\mu_t$ and $\sigma_t$ are scalar, so by Lemma~\ref{lem:factorization-preserved}, the factorization \eqref{eq:factorized-conditionals} is preserved at every
$t$. We drop the $t$ subscript hereafter.

Define the following vectors in $\mathbb{R}^n$, each supported on a single block of the partition:
\begin{align*}
    \tilde u_0 \;&\colon\; \text{supported on } M_0,
    \quad (\tilde u_0)_{M_0} \;=\; \nabla_{x_{M_0}} \log P^0(x_{M_0}); \\
    \tilde u_a \;&\colon\; \text{supported on } M_a,
    \quad (\tilde u_a)_{M_a} \;=\; \nabla_{x_{M_a}} \log P^0(x_{M_a}),
    \quad a \in \Aobs^-; \\
    \tilde v_a \;&\colon\; \text{supported on } M_a,
    \quad (\tilde v_a)_{M_a} \;=\; \nabla_{x_{M_a}} \log P^a(x_{M_a}),
    \quad a \in \Aobs^-.
\end{align*}
Differentiating the $P^0(x) = P^0(x|_{M_{0}}) \prod_{a\in \Aobs^-} P^0(x|_{M_{a}})$ term from \eqref{eq:factorized-conditionals} block-wise yields
\begin{equation}\label{eq:S0-decomp}
    S^0 \;=\; \tilde u_0 + \sum_{a \in \Aobs^-} \tilde u_a.
\end{equation}

For each $a \in \Aobs^-$, expanding
$\log P^a(x) = \log P^a(x_{M_a}) + \log P^0(x_{M_0}) + \sum_{a' \neq a} \log P^0(x_{M_{a'}})$
via \eqref{eq:factorized-conditionals} and differentiating gives
\begin{equation}\label{eq:Sa-decomp}
    S^a \;=\; \tilde v_a + \tilde u_0 + \sum_{a' \in \Aobs^- \setminus \{a\}} \tilde u_{a'}.
\end{equation}
The blocks $\{M_0\} \cup \{M_a\}_{a \in \Aobs^-}$ are all mutually disjoint, so vectors supported on different blocks are orthogonal in $\mathbb{R}^n$. The vectors $\tilde u_a$ and $\tilde v_a$ share support $M_a$ and are not in general orthogonal, but $\tilde u_a$ never appears in $S^a$, so this non-orthogonality plays no role in the calculations below.

\paragraph{Composed score.} Substituting \eqref{eq:S0-decomp} and
\eqref{eq:Sa-decomp} and collecting coefficients block-by-block,
\begin{align*}
    w_0 S^0 + \sum_{a \in \Aobs^-} w_a S^a
    &= (1-k)\!\left( \tilde u_0 + \sum_{a \in \Aobs^-} \tilde u_a \right)
       + \sum_{a \in \Aobs^-}\!\left( \tilde v_a + \tilde u_0
       + \sum_{a' \neq a} \tilde u_{a'} \right) \\
    &= \big[(1-k) + k\big]\, \tilde u_0
       + \sum_{a \in \Aobs^-}\!\big[(1-k) + (k-1)\big]\, \tilde u_a
       + \sum_{a \in \Aobs^-} \tilde v_a \\
    &= \tilde u_0 + \sum_{a \in \Aobs^-} \tilde v_a.
\end{align*}
The summands $\tilde u_0, \{\tilde v_a\}_{a \in \Aobs^-}$ live on the distinct blocks
$M_0, \{M_a\}_{a \in \Aobs^-}$, so by orthogonality
\begin{equation}\label{eq:norm-composed}
    \Big\| w_0 S^0 + \sum_{a \in \Aobs^-} w_a S^a \Big\|^2
    \;=\; \|\tilde u_0\|^2 + \sum_{a \in \Aobs^-} \|\tilde v_a\|^2.
\end{equation}

\paragraph{Sum of weighted squared norms.} Applying orthogonality to \eqref{eq:S0-decomp} and \eqref{eq:Sa-decomp} (using that $\tilde u_a$ does
not appear in $S^a$),
\begin{equation*}
    \|S^0\|^2 = \|\tilde u_0\|^2 + \sum_{a \in \Aobs^-} \|\tilde u_a\|^2,
    \qquad
    \|S^a\|^2 = \|\tilde v_a\|^2 + \|\tilde u_0\|^2 + \sum_{a' \neq a} \|\tilde u_{a'}\|^2.
\end{equation*}
We can then form the weighted combination and collect coefficients,
\begin{align*}
    w_0 \|S^0\|^2 + \sum_{a \in \Aobs^-}\! w_a \|S^a\|^2
    &= [(1-k) + k] \|\tilde u_0\|^2
    + \sum_{a \in \Aobs^-}\![(1-k) + (k-1)] \|\tilde u_a\|^2
    + \sum_{a \in \Aobs^-}\! \|\tilde v_a\|^2 \\
    &= \|\tilde u_0\|^2 + \sum_{a \in \Aobs^-} \|\tilde v_a\|^2.
\end{align*} 

Comparing with \eqref{eq:norm-composed} gives \eqref{eq:fkc-residual}, and combined with the vanishing divergence term, $g_t(x) = 0$ for all $x \in \mathbb{R}^n$ and $t \in [0,1]$.
\end{proof}

\begin{corollary}\label{cor:fkc-equivalent-naive}
Under the hypotheses of Theorem~\ref{thm:fkc-factorized-conditional}, FKC applied to the base composition \eqref{eq:base-composed-distribution} is exactly equivalent to running the na\"ive denoising SDE \eqref{eqn:naive-denoising-sde}.
\end{corollary}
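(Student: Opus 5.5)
The corollary follows by combining \Cref{thm:fkc-factorized-conditional} with the structure of the FKC algorithm described in \Cref{appendix:fkc}. The plan is to show that, when $g_t \equiv 0$, every ingredient of FKC beyond the proposal dynamics becomes trivial. What remains is exactly the na\"ive denoising SDE \eqref{eqn:naive-denoising-sde}, run independently on $K$ particles.

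First, I will recall that FKC propagates each particle $x_t^{(k)}$ with the proposal dynamics. These use the na\"ively-composed drift $v^{w,\naive}_t$ and diffusion $\sigma_t \d\rW_t$, i.e.\ the same SDE \eqref{eqn:naive-denoising-sde}. Then I turn to the weights. Under the hypotheses, \Cref{thm:fkc-factorized-conditional} gives $g_t(x)=0$ for all $x,t$. Hence the centered residual in \eqref{eq:log-weight-update} is
\[
\bar g_t(x) = g_t(x) - \mathbb{E}_{X_t \sim \Comp_w(\{P^a_t\}_a)}[g_t(X_t)] = 0 - 0 = 0 .
\]
Consequently $\d\omega_t^{(k)} = 0$ along every trajectory. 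Since the log-weights start equal at $t=1$ (uniform initialization), they remain equal for all $t\in[0,1]$. The same holds for the discretized increments $\omega_t^{(k)} = g_t(x_t^{(k)}) = 0$ used at each resampling step.

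Next I will handle resampling. With all normalized weights equal to $1/K$, systematic resampling proportional to $\exp(\omega^{(k)}_t)$ selects each particle exactly once. This is a deterministic property of systematic resampling with uniform weights: the $K$ evenly spaced points fall one in each interval of length $1/K$. Resampling is therefore the identity map on the particle set. Multinomial resampling would only preserve the marginal law of each particle, so for the exact-equivalence claim I will use the systematic scheme the paper specifies. It then follows that the joint law of the particle system, and in particular its weighted empirical output, coincides with that of $K$ independent runs of the na\"ive denoising SDE. This matches the $K=1$ observation already noted in \Cref{appendix:fkc}.

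The main obstacle is only a matter of precision: pinning down what ``exactly equivalent'' means. I will state it pathwise (given shared Brownian increments, the trajectories coincide) at the level of the algorithm as implemented, not merely in distribution. I will also note that the divergence term in \eqref{eq:g-sde} was already shown to vanish in the proof of \Cref{thm:fkc-factorized-conditional}, so no extra term enters the weights.
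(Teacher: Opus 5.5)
Your proposal is correct and follows the paper's route. You use \Cref{thm:fkc-factorized-conditional} to get $g_t \equiv 0$, hence $\bar g_t \equiv 0$ and constant log-weights, so resampling does nothing and only the na\"ive SDE dynamics remain; your observation that systematic resampling with equal weights selects each particle exactly once is a slightly sharper justification of the paper's claim that resampling leaves the ensemble unchanged.
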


\begin{proof}
By Theorem~\ref{thm:fkc-factorized-conditional}, $g_t \equiv 0$, hence $\bar g_t \equiv 0$. The log-weight update \eqref{eq:log-weight-update} then gives $\d \omega_t = 0$, so all log-weights remain at their initial value of $0$ throughout the trajectory. Systematic resampling proportional to $\exp(\omega_t^{(k)})$ therefore samples uniformly across particles at every step, leaving the ensemble unchanged. Consequently, the resulting marginal coincides exactly with that of the na\"ive denoising SDE.
\end{proof}

\section{Separate diffusion models}\label{appendix:sep-vs-conditional}
Instead of training a single diffusion model and composing its conditional distributions, it is possible to use separate diffusion models for the conditions, and compose in the same fashion via their product. This is desirable since, for instance, pre-trained diffusion models may possess different capabilities, and composing existing models may allow for greater control or improved performance on particular tasks than individually using any one pre-trained model.

To consider this approach, we re-run all of our experimental settings, individually training a separate diffusion model for each conditional distribution. Full results from these experiments are in \ref{subsec:sep1} and \ref{subsec:sep2}.

We observe that despite learning the underlying conditional distributions to the same level of in-distribution accuracy as conditional models, the compositions of these individual experts perform worse in practice than their conditional model counterpart. This effect is mild for 2D Gaussians (\Cref{tab:2d_separate_combined_results}), but is far more pronounced in image generation (\Cref{fig:room-grid-separate}), where couches often dominate outputs. Furthermore, in out-of-distribution image generation settings, high particle counts cause estimation error to accumulate so quickly that some samples degenerate entirely. The only exception, where separate models outperform a single conditional model, is the out-of-distribution Gaussian mixture setting. In this case, the separate models appear more resilient to performance degradation as the number of particles continues to increase past $4$. 
This evidence suggests that implicit regularization through the weight sharing of the conditional diffusion model may play an important role in mitigating out-of-distribution score estimation error and enabling effective weighted compositions.

\begin{table*}[t]
    \centering
    \caption{
        \textbf{The expected trends also hold for individual ``expert'' models, but overall performance is slightly worse and OOD estimation error accumulates more quickly.}
        The table follows the same conventions and set up as \Cref{tab:2d-combined_results}, but with individual models in place of a single conditional model.
    }
    \label{tab:2d_separate_combined_results}

    \begin{subtable}[t]{0.48\textwidth}
        \centering
        \caption{\textbf{Factorized conditionals + ID}}
        \begin{tabular}{@{}l|cccc@{}}
        \toprule
        $K \backslash N$ & 100 & 1000 & 10000 & analytic \\
        \midrule
        1 & $0.4443$ & $0.1339$ & $0.0933$ & $0.0370$ \\
        4 & $0.5546$ & $0.1217$ & $0.0765$ & $0.0363$ \\
        16 & $0.6076$ & $0.1177$ & $0.0763$ & $0.0372$ \\
        64 & $0.6267$ & $0.1140$ & $0.0743$ & $0.0365$ \\
        256 & $0.6364$ & $0.1196$ & $0.0786$ & $0.0359$ \\
        \bottomrule
        \end{tabular}
    \end{subtable}\hfill
    \begin{subtable}[t]{0.48\textwidth}
        \centering
        \caption{\textbf{Non-factorized conditionals + ID}}
        \begin{tabular}{@{}l|cccc@{}}
        \toprule
        $K \backslash N$ & 100 & 1000 & 10000 & analytic \\
        \midrule
        1 & $0.5553$ & $0.1437$ & $0.1106$ & $0.0673$ \\
        4 & $0.6524$ & $0.1213$ & $0.0781$ & $0.0389$ \\
        16 & $0.6930$ & $0.1148$ & $0.0711$ & $0.0356$ \\
        64 & $0.7034$ & $0.1147$ & $0.0705$ & $0.0358$ \\
        256 & $0.7110$ & $0.1157$ & $0.0737$ & $0.0350$ \\
        \bottomrule
        \end{tabular}
    \end{subtable}

    \vspace{1em}

    \begin{subtable}[t]{0.48\textwidth}
        \centering
        \caption{\textbf{Factorized conditionals + OOD}}
        \begin{tabular}{@{}l|cccc@{}}
        \toprule
        $K \backslash N$ & 100 & 1000 & 10000 & analytic \\
        \midrule
        1 & $1.1771$ & $0.8397$ & $0.7643$ & $0.1163$ \\
        4 & $2.2327$ & $2.0231$ & $2.0465$ & $0.1160$ \\
        16 & $3.1107$ & $3.2394$ & $3.4382$ & $0.1174$ \\
        64 & $3.5605$ & $3.9171$ & $4.2961$ & $0.1143$ \\
        256 & $3.8044$ & $4.2434$ & $4.7380$ & $0.1061$ \\
        \bottomrule
        \end{tabular}
    \end{subtable}\hfill
    \begin{subtable}[t]{0.48\textwidth}
        \centering
        \caption{\textbf{Non-factorized conditionals + OOD}}
        \begin{tabular}{@{}l|cccc@{}}
        \toprule
        $K \backslash N$ & 100 & 1000 & 10000 & analytic \\
        \midrule
        1 & $1.2463$ & $1.0431$ & $0.8929$ & $0.6418$ \\
        4 & $2.3105$ & $2.2133$ & $1.8170$ & $0.1704$ \\
        16 & $3.1794$ & $3.4853$ & $2.9968$ & $0.0905$ \\
        64 & $3.6653$ & $4.2021$ & $3.8652$ & $0.0821$ \\
        256 & $3.8937$ & $4.5325$ & $4.4165$ & $0.0815$ \\
        \bottomrule
        \end{tabular}
    \end{subtable}

\end{table*}

\begin{figure}[H]
    \centering
    \includegraphics[width=0.99\linewidth]{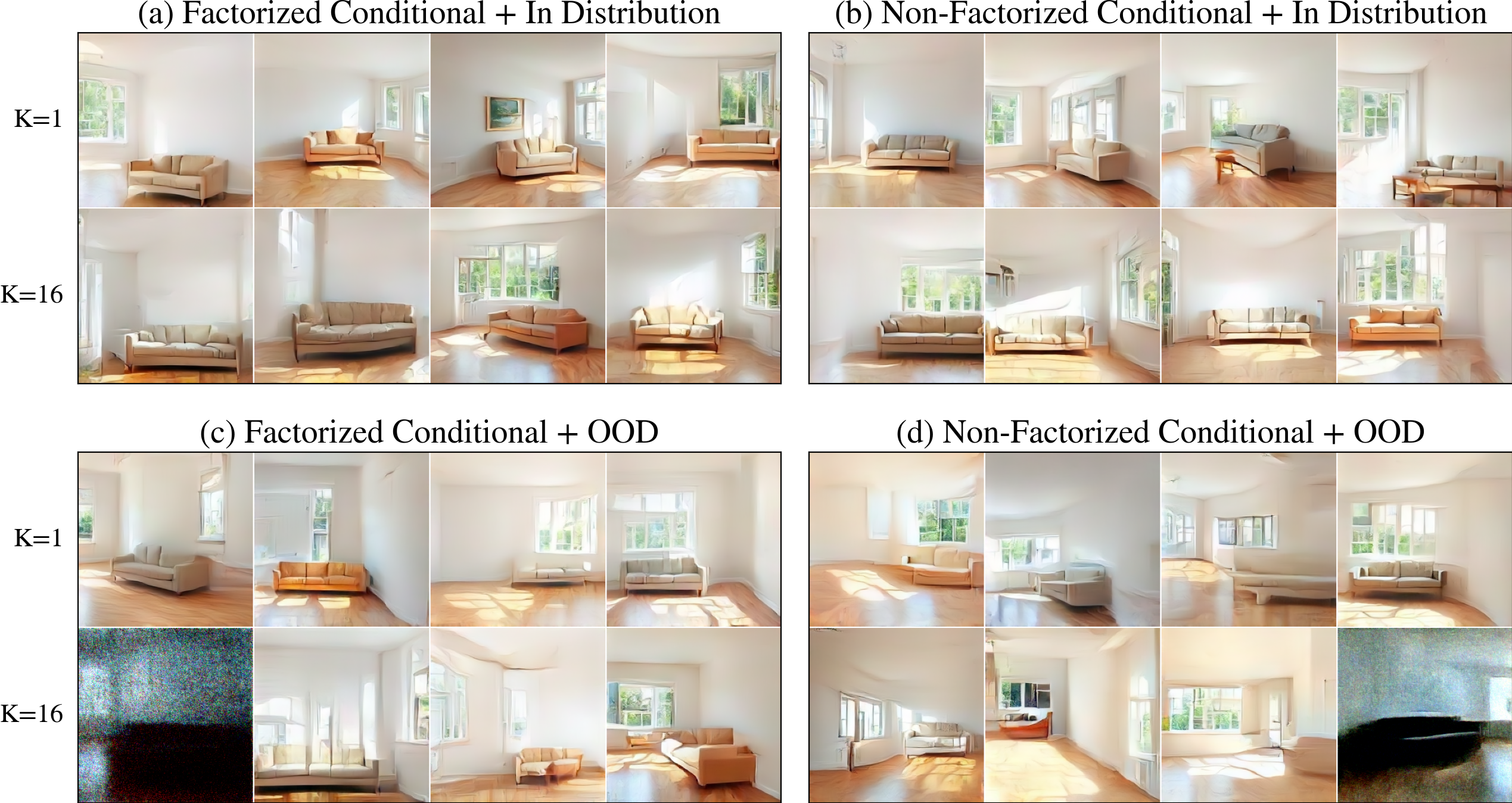}
    \caption{\textbf{Separate diffusion models compose less effectively than a single conditional diffusion model.} Set-up is identical to \Cref{fig:room-grid}, except composition is performed on separate models instead of on a single conditional model. 
    }
    \label{fig:room-grid-separate}
\end{figure}

\section{Experiment setup details}
\label{appendix:exp}
\subsection{2D Gaussian}\label{appendix:exp-details-2d-gaussian}
In this section, we provide additional experimental details. All experimental code is additionally provided at \url{https://github.com/DSoiffer/compositional-diffusion}.
All code is ran on the GPU, with either NVidia L40S or A6000s. The most expensive training code, which is for the image generation models, can be ran on a single GPU in around 9 hours. For each table, training and sampling across all configurations and across 30 runs consumes roughly 7 GPU hours. GPU-based computation is the primary computational bottleneck, memory and cpu requirements are minimal in comparison, though about 48GB of GPU memory is recommended for best performance.

\paragraph{Architecture.}
For each distribution we train a noise-prediction network $\epsilon_\theta(t, x)$ implemented as a 4-layer MLP with hidden width
512 and SiLU activations, predicting $\epsilon \in \mathbb{R}^2$ from the concatenation $[t, x]$. In the conditional setting a single network shared across $\{P^{a_1}, P^{a_2}, P^0\}$ takes $[t, x, \mathrm{onehot}(c)]$ as input, where $c \in \{0, 1, 2\}$ identifies the source distribution. For separate models, $c$ is omitted. The score is recovered as $s(t, x) = -\epsilon_\theta(t, x) / \sigma(t)$. We note that these architectures are highly overparameterized for the particular learning task, consistent with many modern setups.

\paragraph{Training objective.}
We use denoising score matching in $\epsilon$-prediction form \citep{denoising-diffusion-probabilistic-models}.

\paragraph{Optimizer and schedule.}
Adam~\citep{kingma2014adam} with learning rate $2 \times 10^{-4}$, default $(\beta_1, \beta_2) = (0.9, 0.999)$, $\varepsilon = 10^{-8}$, no weight decay, no learning-rate schedule. Batch size is set to $512$. Training is run for $20{,}000$ iterations in the single-shot pipeline and $20{,}000$ iterations per cell in the sweep, a dataset of $N$ points is created before training begins and sampled with replacement during training.

\paragraph{Noising schedule.}
Variance-preserving (VP) SDE~\citep{song2021scorebased} with linear $\beta(\tau) = \beta_{\min} + \tau (\beta_{\max} - \beta_{\min})$, $\beta_{\min} = 0.1$, $\beta_{\max} = 20$, on the forward time $\tau \in [0, 1]$. (Note that for implementation cleanliness, we use the opposite convention as the rest of our paper: $\tau = 1 - t$ so that $t = 0$ is pure noise and $t = 1$ is clean data.)

\paragraph{Sampling.}
We implement the Feynman--Kac corrector sampler~\citep{skreta2025feynman} running $N$ independent swarms of $K$ particles in parallel, performs systematic resampling within every swarm at every step, and at the final step draw one sample from each swarm's weighted ensemble. We use $500$ integration steps. We report metrics on $N = 5000$ output samples per configuration.

\paragraph{Sliced Wasserstein-2.}
Sliced $\mathcal{W}_2$ is computed via the Python Optimal Transport package \citep{flamary2021pot, flamary2024pot}, with $p = 2$ and $2000$ random projections per evaluation.

\paragraph{Maximum Mean Discrepancy.}
We report the unbiased $\mathrm{MMD}^{2}$ U-statistic with the Gaussian RBF kernel $k(x, y) = \exp\!\big(-\gamma \|x - y\|_2^{2}\big)$. $\gamma$ is set by the median heuristic on the empirical cross-pairwise Euclidean distances between the two sample sets, with $\gamma = 1 / (2 \mathrm{med}^{2})$.

\paragraph{Reference samples.}
Because the target is a closed-form Gaussian, we draw ground-truth samples directly via its diagonal covariance. Sliced $\mathcal{W}_2$ and $\mathrm{MMD}^{2}$ both use $5000$ ground-truth samples per evaluation.

\paragraph{Seeds.}
The base seed is $1$ and is applied at the start of each run. For multiple runs $N$, the run $r$ $(0 \le r < N)$ uses seed $1 + r$, which seeds every random component of the process. We average over $n_{\text{runs}} = 30$ independent runs and report mean $\pm$ standard deviation.

\paragraph{Clipping.} During the FKC denoisnig process, we introduce an optional \texttt{g\_clip} paramater, which clips the $g_t(x)$ increment \eqref{eq:g-sde} at every step to $[-\texttt{g\_clip}, \texttt{g\_clip}]$. Clipping generally reduces score estimation error by preventing weights from concentrating on erroneously high values, but can prevent score approximation error from decreasing as quickly as it would otherwise, particularly for oracle or well-learned models. In practice, we find this approach to be an effective method of trading off score estimation error for score approximation error. For 2D Gaussian and Mixture of Gaussians experiments, we set $\texttt{g\_clip}=15.0$, and leave it unset for image generation.

\subsection{Mixture of Gaussians}
\label{appendix:gmm-exp}
Unless otherwise explicitly noted, all neural network architectures, noising schedules, and optimization hyperparameters for the GMM experiments are identical to those described for the 2D Gaussian setting in Appendix~\ref{appendix:exp-details-2d-gaussian}.


\paragraph{Rejection sampling (In-Distribution)}
To obtain exact samples from the target distribution $P^w(x) \propto \frac{P^{a_1}(x)P^{a_2}(x)}{P^0(x)}$, we utilize an exact rejection sampling scheme. We designate $P^{a_1}(x)$ as the proposal distribution, while $P^{a_2}(x)$ and $P^0(x)$ serve as the numerator ratio factor and the denominator, respectively. Mathematically, the modes of $P^{a_2}$ are a subset of the modes of $P^0$. To compute the rejection bound analytically, we structure their implementations to share the exact same underlying GMM components (identical means and isotropic variances), where $P^{a_2}$ assigns a mixture weight of zero to any mode outside its subset. Because $P^{a_2}(x)$ and $P^0(x)$ now differ strictly in their mixture weight arrays (denoted as $w_{a_2}$ and $w_0$), the tight upper bound $M = \sup_x \frac{P^{a_2}(x)}{P^0(x)}$ can be computed exactly as $M = \max_k \frac{w_{a_2}[k]}{w_0[k]}$. Candidate samples are drawn from the proposal $x \sim P^{a_1}(x)$ and are subsequently accepted with probability $\frac{P^{a_2}(x)}{M \cdot P^0(x)}$.

\paragraph{Importance Sampling (Out-of-Distribution)}
In the out-of-distribution settings, the constituent GMMs do not share component means, rendering the analytical rejection bound intractable. For these cases, we employ Importance Sampling to obtain asymptotically exact samples. We compute the analytical product of all numerator GMMs to serve as the proposal distribution. We draw a heavily oversampled batch of candidate points from this proposal ($1000\times$ the target sample size) and assign each point a log-importance weight proportional to $-\sum_j \log p_j(x)$, representing the inverse effect of the denominator distributions. Finally, we perform systematic resampling based on these weights to extract $n$ unweighted samples approximating the target distribution.

\subsection{Objects in a room}\label{appendix:exp-details-room-objects}
For the sake of reproducibility, we produce full details here for our dataset construction and model training and sampling methodologies.

\paragraph{Dataset construction.}
To create our dataset of $256 \times 256$ images, we prompt the open-source text-to-image model \textsc{FLUX.1-schnell} \citep{flux2024,labs2025flux1kontextflowmatching} with a set of custom prompts. The prompt for generating an empty room is: \begin{quote}
    {A photograph of an empty living room with plain white walls and wooden floors. The room has a large window, and it is sunny outside. The room is completely empty. It contains no furniture, no decorations, no plants, and no other objects. Completely undecorated. Abandoned but clean. The photo is wide angle, showing the entire room and how it is empty.}
\end{quote}

\noindent
Class-conditioned prompts form variations on this, replacing the line
\begin{quote}
    The room is completely empty.
\end{quote}
as follows:
\begin{itemize}
    \item \textbf{couch:} The room is completely empty, except for a (couch:1.4)
    \item \textbf{coffee table:} The room is completely empty, except for a (coffee table:1.4)
    \item \textbf{framed painting:} The room is completely empty, except for a (framed painting:1.4) on the wall.
    \item \textbf{couch + coffee table}: The room is completely empty, except for a (couch:1.4) and a (coffee table:1.4).
    \item \textbf{couch + framed painting}: The room is completely empty, except for a (couch:1.4) and a (framed painting:1.4) on the wall.
\end{itemize}
Note that the numbers inside this prompt are interpreted by \textsc{FLUX.1-schnell} as token weights and not as tokens, affording us more control over its prompt adherence. 

After generating a set of candidate images, we manually label $1000$ images from each class as either \texttt{accept} if they meet the prompt's description and do not suffer from artifacting, and \texttt{reject} otherwise. We allow small incidental objects built-in to the room, such as radiators or small ceiling lights.

We embed the labeled images with \textsc{DINOv2-large} \citep{dinov2}. These embeddings are used to train a separate binary logistic regression classifier for each class to automatically accept or reject images. These classifiers are ran on the remaining samples (generating more as necessary) until we achieve $2000$ accepted images per class, which then forms our dataset.

\paragraph{Architecture.}
Class-conditional 2-D U-Net~\citep{ronneberger2015unet} (from the \texttt{diffusers} library~\citep{von-platen-etal-2022-diffusers}) with
input/output channels $3$, sample size $256$, two residual blocks per
level, channel multipliers $(128, 256, 256, 512)$, down-blocks
\texttt{(Down, Down, AttnDown, Down)}, up-blocks
\texttt{(Up, AttnUp, Up, Up)}, and a learned class embedding indexed
by the class (or condition) integer. The total number of class
embeddings equals the number of training labels for each task, i.e. $4$.

\paragraph{Forward process.}
Discrete DDPM~\citep{denoising-diffusion-probabilistic-models} with $T = 1000$ training timesteps and a linear $\beta$ schedule~\citep{nichol2021improvedddpm} via \texttt{diffusers.DDPMScheduler}. We use the $v$-prediction parametrization~\citep{salimans2022-v-prediction}, along with with terminal-SNR rescaling ~\citep{lin2024common-zero-snr} to remove all residual signal at the terminal noise level. To acquire scores from $v$-prediction to perform FKC sampling, we convert $v_t^a$ to a score via $S_t^a = -\epsilon^a_t/ \sigma(t)$ with $\epsilon^a_t = \sigma(t)\, x_t + \alpha(t)\, v^a_t$. We use $100$ integration steps during inference time.

\paragraph{Optimizer and schedule.}
AdamW~\citep{loshchilov2017-adamw} with learning rate $1 \times 10^{-4}$, $(\beta_1, \beta_2) = (0.9, 0.999)$, $\varepsilon = 10^{-8}$, no weight decay. Cosine learning-rate schedule with linear warmup over $500$ steps into $\text{epochs} \times \text{steps\_per\_epoch}$ total updates. Mixed-precision training in \texttt{bfloat16}. An exponential moving average of the model weights with decay $0.9999$ is maintained throughout training; sampling uses the EMA copy.

\paragraph{Data pipeline and hyperparameters.}
Per-image transform: \texttt{RandomHorizontalFlip}, \texttt{ToTensor},\texttt{Normalize(mean, std)} where $(\text{mean}, \text{std})$ is the fixed $(0.5, 0.5, 0.5)$ pair. We balance classes during training according to the mixture probabilities of each condition. Batch size $16$, $50$ epochs.

\section{Additional room images}\label{appendix:room-pics}

In this section, we provide additional samples from the compositions of learned diffusion models. These images are shown in \Cref{fig:room-rows}.

\begin{figure}[H]
    \centering
    \includegraphics[width=0.99\linewidth]{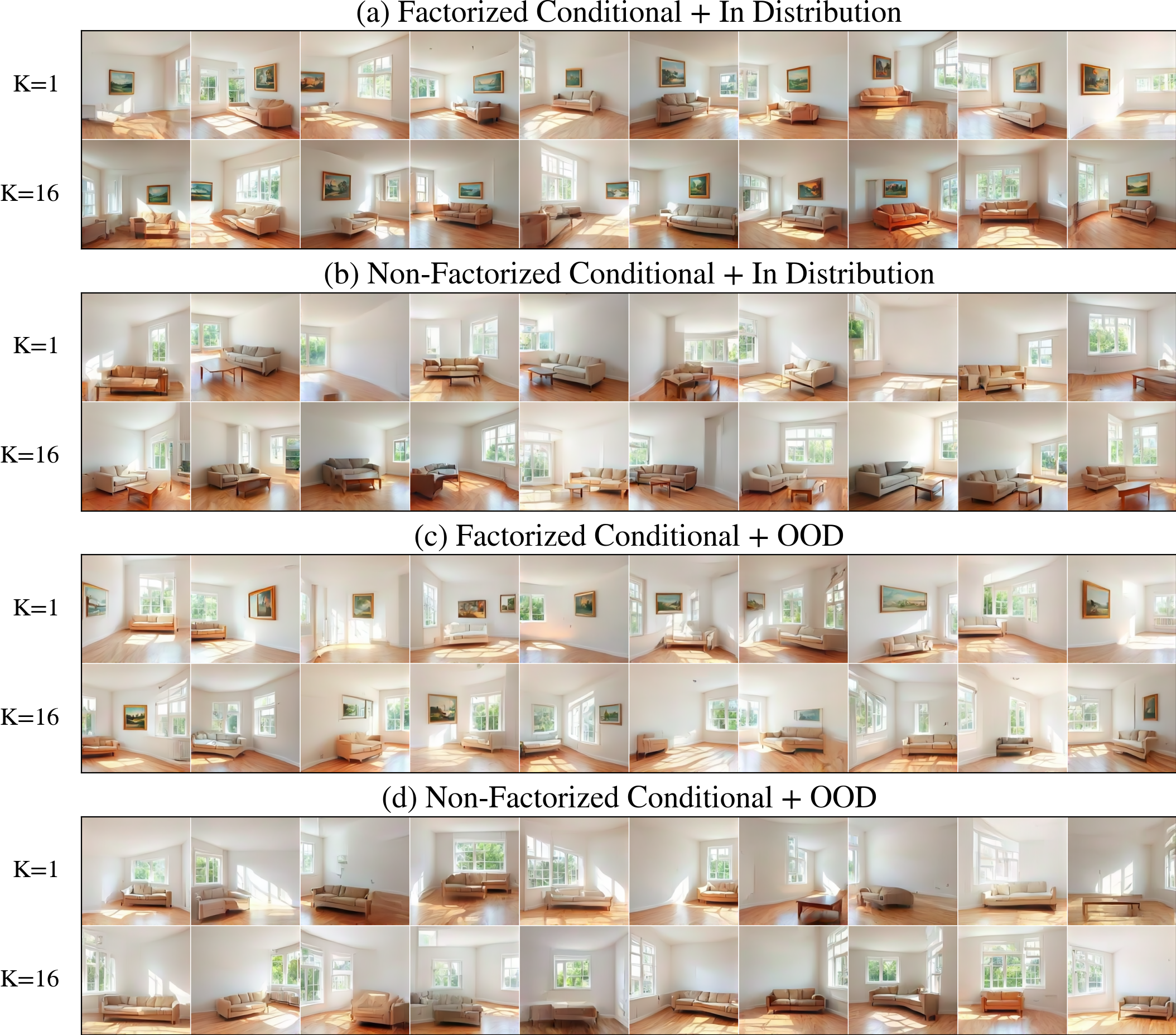}
    \caption{Additional samples from composed conditional diffusion models. Setup and trends are the same as reported in \Cref{fig:room-grid}.}
    \label{fig:room-rows}
\end{figure}

\section{Tables}\label{appendix:tables}

\subsection{Two-dimensional Gaussian (conditional model)}

\begin{table}[H]
\centering
\caption{\textbf{Factorized Conditionals + In-Distribution Composition.} $P^{a_1}=\mathcal{N}(0,\,\mathrm{diag}(10,1))$, $P^{a_2}=\mathcal{N}(0,\,\mathrm{diag}(1,10))$, $P^0=\mathcal{N}(0,\,10I)$, $\bP^w=\mathcal{N}(0,\,I)$): $\mathrm{SW}_2$ (mean $\pm$ std over 30 runs)}
\label{tab:sweep_sw2_1}
\begin{tabular}{l|cccc}
\toprule
$K \backslash N$ & N=100 & N=1000 & N=10000 & analytical \\
\midrule
K=1 & $0.3537 \pm 0.0859$ & $0.1089 \pm 0.0266$ & $0.0747 \pm 0.0231$ & $0.0354 \pm 0.0056$ \\
K=4 & $0.4276 \pm 0.1379$ & $0.0974 \pm 0.0308$ & $0.0750 \pm 0.0208$ & $0.0376 \pm 0.0054$ \\
K=16 & $0.4894 \pm 0.1747$ & $0.0956 \pm 0.0271$ & $0.0772 \pm 0.0226$ & $0.0351 \pm 0.0054$ \\
K=64 & $0.5136 \pm 0.1882$ & $0.0937 \pm 0.0269$ & $0.0758 \pm 0.0223$ & $0.0356 \pm 0.0044$ \\
K=256 & $0.5261 \pm 0.1913$ & $0.1010 \pm 0.0274$ & $0.0777 \pm 0.0237$ & $0.0346 \pm 0.0057$ \\
\bottomrule
\end{tabular}
\end{table}

\begin{table}[H]
\centering
\caption{\textbf{Factorized Conditionals + In-Distribution Composition.} $P^{a_1}=\mathcal{N}(0,\,\mathrm{diag}(10,1))$, $P^{a_2}=\mathcal{N}(0,\,\mathrm{diag}(1,10))$, $P^0=\mathcal{N}(0,\,10I)$, $\bP^w=\mathcal{N}(0,\,I)$): $\mathrm{MMD}^2$ (mean $\pm$ std over 30 runs)}
\label{tab:sweep_mmd2_1}
\begin{tabular}{l|cccc}
\toprule
$K \backslash N$ & N=100 & N=1000 & N=10000 & analytical \\
\midrule
K=1 & $0.0252 \pm 0.0173$ & $0.0025 \pm 0.0014$ & $0.0011 \pm 0.0009$ & $-0.0000 \pm 0.0001$ \\
K=4 & $0.0449 \pm 0.0397$ & $0.0021 \pm 0.0015$ & $0.0010 \pm 0.0008$ & $0.0000 \pm 0.0001$ \\
K=16 & $0.0604 \pm 0.0526$ & $0.0019 \pm 0.0014$ & $0.0012 \pm 0.0009$ & $-0.0000 \pm 0.0001$ \\
K=64 & $0.0657 \pm 0.0561$ & $0.0018 \pm 0.0015$ & $0.0012 \pm 0.0010$ & $-0.0000 \pm 0.0001$ \\
K=256 & $0.0684 \pm 0.0564$ & $0.0021 \pm 0.0015$ & $0.0012 \pm 0.0011$ & $-0.0000 \pm 0.0001$ \\
\bottomrule
\end{tabular}
\end{table}

\begin{table}[H]
\centering
\caption{\textbf{Non-Factorized Conditionals + In-Distribution Composition.} $P^{a_1}=\mathcal{N}(0,\,\mathrm{diag}(10,1))$, $P^{a_2}=\mathcal{N}(0,\,\mathrm{diag}(1,10))$, $P^0=\mathcal{N}(0,\,20I)$, $\bP^w=\mathcal{N}(0,\,\frac{20}{21}I)$): $\mathrm{SW}_2$ (mean $\pm$ std over 30 runs)}
\label{tab:sweep_sw2_3}
\begin{tabular}{l|cccc}
\toprule
$K \backslash N$ & N=100 & N=1000 & N=10000 & analytical \\
\midrule
K=1 & $0.4454 \pm 0.1490$ & $0.1234 \pm 0.0310$ & $0.0957 \pm 0.0240$ & $0.0655 \pm 0.0085$ \\
K=4 & $0.5331 \pm 0.1920$ & $0.0969 \pm 0.0293$ & $0.0751 \pm 0.0219$ & $0.0412 \pm 0.0045$ \\
K=16 & $0.5914 \pm 0.2111$ & $0.0901 \pm 0.0253$ & $0.0750 \pm 0.0227$ & $0.0353 \pm 0.0061$ \\
K=64 & $0.6111 \pm 0.2162$ & $0.0905 \pm 0.0237$ & $0.0740 \pm 0.0230$ & $0.0357 \pm 0.0051$ \\
K=256 & $0.6216 \pm 0.2201$ & $0.0963 \pm 0.0270$ & $0.0793 \pm 0.0234$ & $0.0333 \pm 0.0043$ \\
\bottomrule
\end{tabular}
\end{table}

\begin{table}[H]
\centering
\caption{\textbf{Non-Factorized Conditionals + In-Distribution Composition.} $P^{a_1}=\mathcal{N}(0,\,\mathrm{diag}(10,1))$, $P^{a_2}=\mathcal{N}(0,\,\mathrm{diag}(1,10))$, $P^0=\mathcal{N}(0,\,20I)$, $\bP^w=\mathcal{N}(0,\,\frac{20}{21}I)$): $\mathrm{MMD}^2$ (mean $\pm$ std over 30 runs)}
\label{tab:sweep_mmd2_3}
\begin{tabular}{l|cccc}
\toprule
$K \backslash N$ & N=100 & N=1000 & N=10000 & analytical \\
\midrule
K=1 & $0.0507 \pm 0.0412$ & $0.0038 \pm 0.0021$ & $0.0020 \pm 0.0011$ & $0.0007 \pm 0.0003$ \\
K=4 & $0.0803 \pm 0.0628$ & $0.0022 \pm 0.0016$ & $0.0011 \pm 0.0009$ & $0.0001 \pm 0.0001$ \\
K=16 & $0.0993 \pm 0.0713$ & $0.0018 \pm 0.0012$ & $0.0012 \pm 0.0010$ & $0.0000 \pm 0.0001$ \\
K=64 & $0.1057 \pm 0.0748$ & $0.0018 \pm 0.0013$ & $0.0012 \pm 0.0011$ & $0.0000 \pm 0.0001$ \\
K=256 & $0.1093 \pm 0.0764$ & $0.0021 \pm 0.0016$ & $0.0013 \pm 0.0012$ & $-0.0000 \pm 0.0001$ \\
\bottomrule
\end{tabular}
\end{table}

\begin{table}[H]
\centering
\caption{\textbf{Factorized Conditionals + Out-Of-Distribution Composition.} $P^{a_1}=\mathcal{N}(0,\,\mathrm{diag}(10,1))$, $P^{a_2}=\mathcal{N}(0,\,\mathrm{diag}(1,10))$, $P^0=\mathcal{N}(0,\,I)$, $\bP^w=\mathcal{N}(0,\,10I)$): $\mathrm{SW}_2$ (mean $\pm$ std over 30 runs)}
\label{tab:sweep_sw2_2}
\begin{tabular}{l|cccc}
\toprule
$K \backslash N$ & N=100 & N=1000 & N=10000 & analytical \\
\midrule
K=1 & $0.9658 \pm 0.2036$ & $0.5577 \pm 0.1655$ & $0.3948 \pm 0.1459$ & $0.1122 \pm 0.0189$ \\
K=4 & $2.0688 \pm 0.6346$ & $1.6885 \pm 0.4857$ & $1.1773 \pm 0.4160$ & $0.1130 \pm 0.0129$ \\
K=16 & $2.9177 \pm 0.9132$ & $2.7214 \pm 0.8497$ & $1.9699 \pm 0.6611$ & $0.1133 \pm 0.0228$ \\
K=64 & $3.3210 \pm 1.0273$ & $3.3348 \pm 1.0738$ & $2.4983 \pm 0.8347$ & $0.1124 \pm 0.0147$ \\
K=256 & $3.5097 \pm 1.0678$ & $3.6426 \pm 1.2007$ & $2.8210 \pm 0.9801$ & $0.1140 \pm 0.0158$ \\
\bottomrule
\end{tabular}
\end{table}

\begin{table}[H]
\centering
\caption{\textbf{Factorized Conditionals + Out-Of-Distribution Composition.} $P^{a_1}=\mathcal{N}(0,\,\mathrm{diag}(10,1))$, $P^{a_2}=\mathcal{N}(0,\,\mathrm{diag}(1,10))$, $P^0=\mathcal{N}(0,\,I)$, $\bP^w=\mathcal{N}(0,\,10I)$): $\mathrm{MMD}^2$ (mean $\pm$ std over 30 runs)}
\label{tab:sweep_mmd2_2}
\begin{tabular}{l|cccc}
\toprule
$K \backslash N$ & N=100 & N=1000 & N=10000 & analytical \\
\midrule
K=1 & $0.0135 \pm 0.0067$ & $0.0042 \pm 0.0025$ & $0.0025 \pm 0.0019$ & $-0.0000 \pm 0.0001$ \\
K=4 & $0.0872 \pm 0.0513$ & $0.0541 \pm 0.0264$ & $0.0271 \pm 0.0160$ & $-0.0000 \pm 0.0001$ \\
K=16 & $0.1677 \pm 0.0977$ & $0.1294 \pm 0.0694$ & $0.0695 \pm 0.0373$ & $0.0000 \pm 0.0002$ \\
K=64 & $0.2039 \pm 0.1171$ & $0.1778 \pm 0.0930$ & $0.1042 \pm 0.0538$ & $-0.0000 \pm 0.0001$ \\
K=256 & $0.2210 \pm 0.1253$ & $0.2013 \pm 0.1010$ & $0.1269 \pm 0.0685$ & $0.0000 \pm 0.0001$ \\
\bottomrule
\end{tabular}
\end{table}

\begin{table}[H]
\centering
\caption{\textbf{Non-Factorized Conditionals + Out-Of-Distribution Composition.} $P^{a_1}=\mathcal{N}(0,\,\mathrm{diag}(10,1))$, $P^{a_2}=\mathcal{N}(0,\,\mathrm{diag}(1,10))$, $P^0=\mathcal{N}(0,\,1.1I)$, $\bP^w=\mathcal{N}(0,\,\frac{110}{21}I)$): $\mathrm{SW}_2$ (mean $\pm$ std over 30 runs)}
\label{tab:sweep_sw2_4}
\begin{tabular}{l|cccc}
\toprule
$K \backslash N$ & N=100 & N=1000 & N=10000 & analytical \\
\midrule
K=1 & $1.0957 \pm 0.3179$ & $0.9154 \pm 0.2265$ & $0.8518 \pm 0.2079$ & $0.6375 \pm 0.0257$ \\
K=4 & $2.1611 \pm 0.6579$ & $1.8306 \pm 0.5788$ & $1.3763 \pm 0.5356$ & $0.1764 \pm 0.0222$ \\
K=16 & $3.0011 \pm 0.8826$ & $2.8830 \pm 0.9744$ & $2.0117 \pm 0.8550$ & $0.0981 \pm 0.0197$ \\
K=64 & $3.4224 \pm 1.0159$ & $3.6247 \pm 1.2177$ & $2.5011 \pm 1.1307$ & $0.0831 \pm 0.0128$ \\
K=256 & $3.6076 \pm 1.0687$ & $3.9971 \pm 1.3265$ & $2.8322 \pm 1.3633$ & $0.0820 \pm 0.0147$ \\
\bottomrule
\end{tabular}
\end{table}

\begin{table}[H]
\centering
\caption{\textbf{Non-Factorized Conditionals + Out-Of-Distribution Composition.} $P^{a_1}=\mathcal{N}(0,\,\mathrm{diag}(10,1))$, $P^{a_2}=\mathcal{N}(0,\,\mathrm{diag}(1,10))$, $P^0=\mathcal{N}(0,\,1.1I)$, $\bP^w=\mathcal{N}(0,\,\frac{110}{21}I)$): $\mathrm{MMD}^2$ (mean $\pm$ std over 30 runs)}
\label{tab:sweep_mmd2_4}
\begin{tabular}{l|cccc}
\toprule
$K \backslash N$ & N=100 & N=1000 & N=10000 & analytical \\
\midrule
K=1 & $0.0300 \pm 0.0148$ & $0.0189 \pm 0.0079$ & $0.0172 \pm 0.0063$ & $0.0117 \pm 0.0010$ \\
K=4 & $0.1267 \pm 0.0599$ & $0.0770 \pm 0.0358$ & $0.0438 \pm 0.0271$ & $0.0009 \pm 0.0003$ \\
K=16 & $0.2241 \pm 0.1033$ & $0.1758 \pm 0.0890$ & $0.0862 \pm 0.0539$ & $0.0001 \pm 0.0002$ \\
K=64 & $0.2689 \pm 0.1243$ & $0.2498 \pm 0.1201$ & $0.1215 \pm 0.0794$ & $-0.0000 \pm 0.0001$ \\
K=256 & $0.2866 \pm 0.1310$ & $0.2853 \pm 0.1313$ & $0.1455 \pm 0.0996$ & $-0.0000 \pm 0.0001$ \\
\bottomrule
\end{tabular}
\end{table}

\subsection{Two-dimensional Gaussian (separate models)}
\label{subsec:sep1}

\begin{table}[H]
\centering
\caption{\textbf{Factorized Conditionals + In-Distribution Composition.} Separate models, $P^{a_1}=\mathcal{N}(0,\,\mathrm{diag}(10,1))$, $P^{a_2}=\mathcal{N}(0,\,\mathrm{diag}(1,10))$, $P^0=\mathcal{N}(0,\,10I)$, $\bP^w=\mathcal{N}(0,\,I)$): $\mathrm{SW}_2$ (mean $\pm$ std over 30 runs)}
\label{tab:sweep_sw2_sep1}
\begin{tabular}{l|cccc}
\toprule
$K \backslash N$ & N=100 & N=1000 & N=10000 & analytical \\
\midrule
K=1 & $0.4443 \pm 0.0893$ & $0.1339 \pm 0.0375$ & $0.0933 \pm 0.0327$ & $0.0370 \pm 0.0051$ \\
K=4 & $0.5546 \pm 0.1553$ & $0.1217 \pm 0.0304$ & $0.0765 \pm 0.0259$ & $0.0363 \pm 0.0057$ \\
K=16 & $0.6076 \pm 0.1804$ & $0.1177 \pm 0.0284$ & $0.0763 \pm 0.0224$ & $0.0372 \pm 0.0067$ \\
K=64 & $0.6267 \pm 0.1911$ & $0.1140 \pm 0.0327$ & $0.0743 \pm 0.0278$ & $0.0365 \pm 0.0048$ \\
K=256 & $0.6364 \pm 0.1947$ & $0.1196 \pm 0.0291$ & $0.0786 \pm 0.0242$ & $0.0359 \pm 0.0062$ \\
\bottomrule
\end{tabular}
\end{table}

\begin{table}[H]
\centering
\caption{\textbf{Factorized Conditionals + In-Distribution Composition.} Separate models, $P^{a_1}=\mathcal{N}(0,\,\mathrm{diag}(10,1))$, $P^{a_2}=\mathcal{N}(0,\,\mathrm{diag}(1,10))$, $P^0=\mathcal{N}(0,\,10I)$, $\bP^w=\mathcal{N}(0,\,I)$): $\mathrm{MMD}^2$ (mean $\pm$ std over 30 runs)}
\label{tab:sweep_mmd2_sep1}
\begin{tabular}{l|cccc}
\toprule
$K \backslash N$ & N=100 & N=1000 & N=10000 & analytical \\
\midrule
K=1 & $0.0323 \pm 0.0206$ & $0.0041 \pm 0.0026$ & $0.0019 \pm 0.0016$ & $0.0000 \pm 0.0001$ \\
K=4 & $0.0647 \pm 0.0485$ & $0.0033 \pm 0.0021$ & $0.0012 \pm 0.0012$ & $0.0000 \pm 0.0001$ \\
K=16 & $0.0817 \pm 0.0584$ & $0.0030 \pm 0.0017$ & $0.0011 \pm 0.0008$ & $0.0000 \pm 0.0001$ \\
K=64 & $0.0876 \pm 0.0624$ & $0.0028 \pm 0.0019$ & $0.0011 \pm 0.0011$ & $-0.0000 \pm 0.0001$ \\
K=256 & $0.0903 \pm 0.0638$ & $0.0031 \pm 0.0019$ & $0.0012 \pm 0.0011$ & $0.0000 \pm 0.0001$ \\
\bottomrule
\end{tabular}
\end{table}

\begin{table}[H]
\centering
\caption{\textbf{Non-Factorized Conditionals + In-Distribution Composition.} Separate models, $P^{a_1}=\mathcal{N}(0,\,\mathrm{diag}(10,1))$, $P^{a_2}=\mathcal{N}(0,\,\mathrm{diag}(1,10))$, $P^0=\mathcal{N}(0,\,20I)$, $\bP^w=\mathcal{N}(0,\,\frac{20}{21}I)$): $\mathrm{SW}_2$ (mean $\pm$ std over 30 runs)}
\label{tab:sweep_sw2_sep3}
\begin{tabular}{l|cccc}
\toprule
$K \backslash N$ & N=100 & N=1000 & N=10000 & analytical \\
\midrule
K=1 & $0.5553 \pm 0.1378$ & $0.1437 \pm 0.0343$ & $0.1106 \pm 0.0300$ & $0.0673 \pm 0.0073$ \\
K=4 & $0.6524 \pm 0.1708$ & $0.1213 \pm 0.0313$ & $0.0781 \pm 0.0255$ & $0.0389 \pm 0.0079$ \\
K=16 & $0.6930 \pm 0.1783$ & $0.1148 \pm 0.0283$ & $0.0711 \pm 0.0194$ & $0.0356 \pm 0.0057$ \\
K=64 & $0.7034 \pm 0.1815$ & $0.1147 \pm 0.0322$ & $0.0705 \pm 0.0248$ & $0.0358 \pm 0.0049$ \\
K=256 & $0.7110 \pm 0.1810$ & $0.1157 \pm 0.0320$ & $0.0737 \pm 0.0226$ & $0.0350 \pm 0.0063$ \\
\bottomrule
\end{tabular}
\end{table}

\begin{table}[H]
\centering
\caption{\textbf{Non-Factorized Conditionals + In-Distribution Composition.} Separate models, $P^{a_1}=\mathcal{N}(0,\,\mathrm{diag}(10,1))$, $P^{a_2}=\mathcal{N}(0,\,\mathrm{diag}(1,10))$, $P^0=\mathcal{N}(0,\,20I)$, $\bP^w=\mathcal{N}(0,\,\frac{20}{21}I)$): $\mathrm{MMD}^2$ (mean $\pm$ std over 30 runs)}
\label{tab:sweep_mmd2_sep3}
\begin{tabular}{l|cccc}
\toprule
$K \backslash N$ & N=100 & N=1000 & N=10000 & analytical \\
\midrule
K=1 & $0.0640 \pm 0.0471$ & $0.0053 \pm 0.0029$ & $0.0030 \pm 0.0018$ & $0.0007 \pm 0.0002$ \\
K=4 & $0.0979 \pm 0.0708$ & $0.0037 \pm 0.0023$ & $0.0014 \pm 0.0012$ & $0.0001 \pm 0.0002$ \\
K=16 & $0.1123 \pm 0.0763$ & $0.0031 \pm 0.0020$ & $0.0010 \pm 0.0007$ & $0.0000 \pm 0.0001$ \\
K=64 & $0.1159 \pm 0.0789$ & $0.0032 \pm 0.0022$ & $0.0010 \pm 0.0010$ & $-0.0000 \pm 0.0001$ \\
K=256 & $0.1185 \pm 0.0796$ & $0.0032 \pm 0.0024$ & $0.0011 \pm 0.0010$ & $0.0000 \pm 0.0001$ \\
\bottomrule
\end{tabular}
\end{table}

\begin{table}[H]
\centering
\caption{\textbf{Factorized Conditionals + Out-Of-Distribution Composition.} Separate models, $P^{a_1}=\mathcal{N}(0,\,\mathrm{diag}(10,1))$, $P^{a_2}=\mathcal{N}(0,\,\mathrm{diag}(1,10))$, $P^0=\mathcal{N}(0,\,I)$, $\bP^w=\mathcal{N}(0,\,10I)$): $\mathrm{SW}_2$ (mean $\pm$ std over 30 runs)}
\label{tab:sweep_sw2_sep2}
\begin{tabular}{l|cccc}
\toprule
$K \backslash N$ & N=100 & N=1000 & N=10000 & analytical \\
\midrule
K=1 & $1.1771 \pm 0.3766$ & $0.8397 \pm 0.2398$ & $0.7643 \pm 0.3888$ & $0.1163 \pm 0.0150$ \\
K=4 & $2.2327 \pm 0.9720$ & $2.0231 \pm 0.6790$ & $2.0465 \pm 0.9039$ & $0.1160 \pm 0.0205$ \\
K=16 & $3.1107 \pm 1.4567$ & $3.2394 \pm 0.9808$ & $3.4382 \pm 1.6403$ & $0.1174 \pm 0.0206$ \\
K=64 & $3.5605 \pm 1.6860$ & $3.9171 \pm 1.0955$ & $4.2961 \pm 2.1283$ & $0.1143 \pm 0.0150$ \\
K=256 & $3.8044 \pm 1.8279$ & $4.2434 \pm 1.2064$ & $4.7380 \pm 2.4489$ & $0.1061 \pm 0.0211$ \\
\bottomrule
\end{tabular}
\end{table}

\begin{table}[H]
\centering
\caption{\textbf{Factorized Conditionals + Out-Of-Distribution Composition.} Separate models, $P^{a_1}=\mathcal{N}(0,\,\mathrm{diag}(10,1))$, $P^{a_2}=\mathcal{N}(0,\,\mathrm{diag}(1,10))$, $P^0=\mathcal{N}(0,\,I)$, $\bP^w=\mathcal{N}(0,\,10I)$): $\mathrm{MMD}^2$ (mean $\pm$ std over 30 runs)}
\label{tab:sweep_mmd2_sep2}
\begin{tabular}{l|cccc}
\toprule
$K \backslash N$ & N=100 & N=1000 & N=10000 & analytical \\
\midrule
K=1 & $0.0179 \pm 0.0090$ & $0.0083 \pm 0.0042$ & $0.0049 \pm 0.0029$ & $0.0000 \pm 0.0001$ \\
K=4 & $0.0903 \pm 0.0639$ & $0.0678 \pm 0.0387$ & $0.0655 \pm 0.0354$ & $0.0000 \pm 0.0001$ \\
K=16 & $0.1727 \pm 0.1192$ & $0.1600 \pm 0.0751$ & $0.1777 \pm 0.1020$ & $0.0000 \pm 0.0001$ \\
K=64 & $0.2171 \pm 0.1387$ & $0.2174 \pm 0.0922$ & $0.2428 \pm 0.1353$ & $-0.0000 \pm 0.0001$ \\
K=256 & $0.2403 \pm 0.1440$ & $0.2445 \pm 0.1040$ & $0.2707 \pm 0.1440$ & $-0.0000 \pm 0.0001$ \\
\bottomrule
\end{tabular}
\end{table}

\begin{table}[H]
\centering
\caption{\textbf{Non-Factorized Conditionals + Out-Of-Distribution Composition.} Separate models, $P^{a_1}=\mathcal{N}(0,\,\mathrm{diag}(10,1))$, $P^{a_2}=\mathcal{N}(0,\,\mathrm{diag}(1,10))$, $P^0=\mathcal{N}(0,\,1.1I)$, $\bP^w=\mathcal{N}(0,\,\frac{110}{21}I)$): $\mathrm{SW}_2$ (mean $\pm$ std over 30 runs)}
\label{tab:sweep_sw2_sep4}
\begin{tabular}{l|cccc}
\toprule
$K \backslash N$ & N=100 & N=1000 & N=10000 & analytical \\
\midrule
K=1 & $1.2463 \pm 0.4880$ & $1.0431 \pm 0.3266$ & $0.8929 \pm 0.4140$ & $0.6418 \pm 0.0308$ \\
K=4 & $2.3105 \pm 0.9665$ & $2.2133 \pm 0.8139$ & $1.8170 \pm 0.9345$ & $0.1704 \pm 0.0218$ \\
K=16 & $3.1794 \pm 1.4174$ & $3.4853 \pm 1.2953$ & $2.9968 \pm 1.5100$ & $0.0905 \pm 0.0153$ \\
K=64 & $3.6653 \pm 1.6599$ & $4.2021 \pm 1.5352$ & $3.8652 \pm 1.8677$ & $0.0821 \pm 0.0100$ \\
K=256 & $3.8937 \pm 1.7769$ & $4.5325 \pm 1.6601$ & $4.4165 \pm 2.1453$ & $0.0815 \pm 0.0139$ \\
\bottomrule
\end{tabular}
\end{table}

\begin{table}[H]
\centering
\caption{\textbf{Non-Factorized Conditionals + Out-Of-Distribution Composition.} Separate models, $P^{a_1}=\mathcal{N}(0,\,\mathrm{diag}(10,1))$, $P^{a_2}=\mathcal{N}(0,\,\mathrm{diag}(1,10))$, $P^0=\mathcal{N}(0,\,1.1I)$, $\bP^w=\mathcal{N}(0,\,\frac{110}{21}I)$): $\mathrm{MMD}^2$ (mean $\pm$ std over 30 runs)}
\label{tab:sweep_mmd2_sep4}
\begin{tabular}{l|cccc}
\toprule
$K \backslash N$ & N=100 & N=1000 & N=10000 & analytical \\
\midrule
K=1 & $0.0340 \pm 0.0180$ & $0.0202 \pm 0.0090$ & $0.0156 \pm 0.0090$ & $0.0119 \pm 0.0012$ \\
K=4 & $0.1261 \pm 0.0669$ & $0.1034 \pm 0.0566$ & $0.0736 \pm 0.0543$ & $0.0008 \pm 0.0003$ \\
K=16 & $0.2251 \pm 0.1118$ & $0.2212 \pm 0.1242$ & $0.1741 \pm 0.1179$ & $0.0001 \pm 0.0001$ \\
K=64 & $0.2814 \pm 0.1346$ & $0.2811 \pm 0.1461$ & $0.2483 \pm 0.1399$ & $-0.0000 \pm 0.0001$ \\
K=256 & $0.3058 \pm 0.1422$ & $0.3061 \pm 0.1527$ & $0.2891 \pm 0.1472$ & $-0.0000 \pm 0.0001$ \\
\bottomrule
\end{tabular}
\end{table}

\subsection{Gaussian mixture models (conditional model)} 

\begin{table}[H]
\centering
\caption{\textbf{GMM, In-Distribution} ($\mathrm{SW}_2$ mean $\pm$ std over 30 runs)}
\label{tab:score_sw2_in}
\begin{tabular}{l|cccc}
\toprule
$K \backslash N$ & N=100 & N=1000 & N=10000 & analytical \\
\midrule
K=1 & $1.2113 \pm 0.2110$ & $1.2470 \pm 0.1079$ & $1.1704 \pm 0.0914$ & $1.2133 \pm 0.0310$ \\
K=4 & $0.6678 \pm 0.2758$ & $0.2842 \pm 0.0728$ & $0.2318 \pm 0.0606$ & $0.2414 \pm 0.0308$ \\
K=16 & $0.7735 \pm 0.3478$ & $0.2206 \pm 0.0775$ & $0.1454 \pm 0.0567$ & $0.1044 \pm 0.0314$ \\
K=64 & $0.8130 \pm 0.3713$ & $0.2134 \pm 0.0730$ & $0.1478 \pm 0.0533$ & $0.0736 \pm 0.0266$ \\
K=256 & $0.8288 \pm 0.3826$ & $0.1985 \pm 0.0697$ & $0.1350 \pm 0.0539$ & $0.0771 \pm 0.0285$ \\
\bottomrule
\end{tabular}
\end{table}

\begin{table}[H]
\centering
\caption{\textbf{GMM, In-Distribution} ($\mathrm{MMD}^2$ mean $\pm$ std over 30 runs)}
\label{tab:score_mmd_in}
\begin{tabular}{l|cccc}
\toprule
$K \backslash N$ & N=100 & N=1000 & N=10000 & analytical \\
\midrule
K=1 & $0.0590 \pm 0.0272$ & $0.0457 \pm 0.0097$ & $0.0370 \pm 0.0088$ & $0.0392 \pm 0.0026$ \\
K=4 & $0.0576 \pm 0.0474$ & $0.0039 \pm 0.0022$ & $0.0016 \pm 0.0019$ & $0.0002 \pm 0.0002$ \\
K=16 & $0.0839 \pm 0.0702$ & $0.0045 \pm 0.0031$ & $0.0015 \pm 0.0020$ & $0.0000 \pm 0.0002$ \\
K=64 & $0.0935 \pm 0.0791$ & $0.0045 \pm 0.0032$ & $0.0016 \pm 0.0023$ & $-0.0000 \pm 0.0001$ \\
K=256 & $0.0977 \pm 0.0825$ & $0.0040 \pm 0.0028$ & $0.0017 \pm 0.0021$ & $-0.0000 \pm 0.0002$ \\
\bottomrule
\end{tabular}
\end{table}

\begin{table}[H]
\centering
\caption{\textbf{GMM, Out-of-Distribution} ($\mathrm{SW}_2$ mean $\pm$ std over 30 runs)}
\label{tab:score_sw2_ood}
\begin{tabular}{l|cccc}
\toprule
$K \backslash N$ & N=100 & N=1000 & N=10000 & analytical \\
\midrule
K=1 & $4.0336 \pm 0.5817$ & $4.0508 \pm 0.3100$ & $3.9606 \pm 0.2339$ & $4.2102 \pm 0.0441$ \\
K=4 & $2.7357 \pm 0.6423$ & $2.7265 \pm 0.4211$ & $2.4675 \pm 0.3453$ & $2.5081 \pm 0.0456$ \\
K=16 & $2.4276 \pm 0.7330$ & $2.4556 \pm 0.6946$ & $2.1718 \pm 0.4914$ & $1.7704 \pm 0.0374$ \\
K=64 & $2.6212 \pm 0.8565$ & $2.7205 \pm 0.9299$ & $2.4005 \pm 0.6674$ & $1.5531 \pm 0.0455$ \\
K=256 & $3.0238 \pm 1.0936$ & $3.0562 \pm 1.1639$ & $2.6706 \pm 0.8479$ & $1.4670 \pm 0.0506$ \\
\bottomrule
\end{tabular}
\end{table}

\begin{table}[H]
\centering
\caption{\textbf{GMM, Out-of-Distribution} ($\mathrm{MMD}^2$ mean $\pm$ std over 30 runs)}
\label{tab:score_mmd_ood}
\begin{tabular}{l|cccc}
\toprule
$K \backslash N$ & N=100 & N=1000 & N=10000 & analytical \\
\midrule
K=1 & $0.2616 \pm 0.0777$ & $0.2343 \pm 0.0382$ & $0.2262 \pm 0.0347$ & $0.2692 \pm 0.0077$ \\
K=4 & $0.2208 \pm 0.1082$ & $0.0892 \pm 0.0303$ & $0.0583 \pm 0.0202$ & $0.0362 \pm 0.0027$ \\
K=16 & $0.2514 \pm 0.1312$ & $0.1009 \pm 0.0450$ & $0.0610 \pm 0.0322$ & $0.0102 \pm 0.0009$ \\
K=64 & $0.2782 \pm 0.1413$ & $0.1539 \pm 0.0785$ & $0.0879 \pm 0.0554$ & $0.0067 \pm 0.0008$ \\
K=256 & $0.3068 \pm 0.1460$ & $0.2036 \pm 0.1022$ & $0.1133 \pm 0.0767$ & $0.0056 \pm 0.0008$ \\
\bottomrule
\end{tabular}
\end{table}

\subsection{Gaussian mixture models (separate models)}\label{subsec:sep2}

\begin{table}[H]
\centering
\caption{\textbf{GMM, In-Distribution} Separate models, ($\mathrm{SW}_2$ mean $\pm$ std over 30 runs)}
\label{tab:score_sep_sw2_in}
\begin{tabular}{l|cccc}
\toprule
$K \backslash N$ & N=100 & N=1000 & N=10000 & analytical \\
\midrule
K=1 & $1.2393 \pm 0.2418$ & $1.2108 \pm 0.1863$ & $1.2029 \pm 0.1275$ & $1.2157 \pm 0.0305$ \\
K=4 & $0.7108 \pm 0.2359$ & $0.4015 \pm 0.1510$ & $0.2925 \pm 0.1031$ & $0.2452 \pm 0.0317$ \\
K=16 & $0.8054 \pm 0.3097$ & $0.4442 \pm 0.2769$ & $0.2116 \pm 0.1222$ & $0.1128 \pm 0.0311$ \\
K=64 & $0.8440 \pm 0.3406$ & $0.5025 \pm 0.4534$ & $0.2196 \pm 0.1230$ & $0.0843 \pm 0.0251$ \\
K=256 & $0.8624 \pm 0.3564$ & $0.5514 \pm 0.6366$ & $0.2214 \pm 0.1321$ & $0.0740 \pm 0.0336$ \\
\bottomrule
\end{tabular}
\end{table}

\begin{table}[H]
\centering
\caption{\textbf{GMM, In-Distribution} Separate models, ($\mathrm{MMD}^2$ mean $\pm$ std over 30 runs)}
\label{tab:score_sep_mmd_in}
\begin{tabular}{l|cccc}
\toprule
$K \backslash N$ & N=100 & N=1000 & N=10000 & analytical \\
\midrule
K=1 & $0.0630 \pm 0.0308$ & $0.0444 \pm 0.0176$ & $0.0419 \pm 0.0119$ & $0.0396 \pm 0.0027$ \\
K=4 & $0.0622 \pm 0.0525$ & $0.0104 \pm 0.0110$ & $0.0053 \pm 0.0059$ & $0.0002 \pm 0.0002$ \\
K=16 & $0.0864 \pm 0.0743$ & $0.0183 \pm 0.0238$ & $0.0056 \pm 0.0099$ & $0.0000 \pm 0.0002$ \\
K=64 & $0.0969 \pm 0.0862$ & $0.0233 \pm 0.0323$ & $0.0064 \pm 0.0110$ & $0.0000 \pm 0.0002$ \\
K=256 & $0.1014 \pm 0.0921$ & $0.0269 \pm 0.0383$ & $0.0066 \pm 0.0127$ & $-0.0000 \pm 0.0001$ \\
\bottomrule
\end{tabular}
\end{table}

\begin{table}[H]
\centering
\caption{\textbf{GMM, Out-of-Distribution} Separate models, ($\mathrm{SW}_2$ mean $\pm$ std over 30 runs)}
\label{tab:score_sep_sw2_ood}
\begin{tabular}{l|cccc}
\toprule
$K \backslash N$ & N=100 & N=1000 & N=10000 & analytical \\
\midrule
K=1 & $4.2381 \pm 0.6608$ & $3.8951 \pm 0.3696$ & $3.8501 \pm 0.2915$ & $4.2093 \pm 0.0472$ \\
K=4 & $2.7454 \pm 0.7513$ & $2.3919 \pm 0.4998$ & $2.2963 \pm 0.2614$ & $2.5141 \pm 0.0447$ \\
K=16 & $2.2444 \pm 0.8758$ & $1.8197 \pm 0.6064$ & $1.6168 \pm 0.3447$ & $1.7899 \pm 0.0377$ \\
K=64 & $2.5258 \pm 2.1074$ & $1.6895 \pm 0.6533$ & $1.4262 \pm 0.3536$ & $1.5467 \pm 0.0364$ \\
K=256 & $2.7293 \pm 3.1365$ & $1.6850 \pm 0.7136$ & $1.3582 \pm 0.3864$ & $1.4619 \pm 0.0493$ \\
\bottomrule
\end{tabular}
\end{table}

\begin{table}[H]
\centering
\caption{\textbf{GMM, Out-of-Distribution} Separate models, ($\mathrm{MMD}^2$ mean $\pm$ std over 30 runs)}
\label{tab:score_sep_mmd_ood}
\begin{tabular}{l|cccc}
\toprule
$K \backslash N$ & N=100 & N=1000 & N=10000 & analytical \\
\midrule
K=1 & $0.2727 \pm 0.0964$ & $0.2484 \pm 0.0501$ & $0.2482 \pm 0.0373$ & $0.2706 \pm 0.0076$ \\
K=4 & $0.2555 \pm 0.1423$ & $0.1006 \pm 0.0552$ & $0.0559 \pm 0.0219$ & $0.0365 \pm 0.0025$ \\
K=16 & $0.3063 \pm 0.1723$ & $0.1165 \pm 0.0949$ & $0.0368 \pm 0.0221$ & $0.0106 \pm 0.0010$ \\
K=64 & $0.3283 \pm 0.1820$ & $0.1378 \pm 0.1121$ & $0.0368 \pm 0.0238$ & $0.0066 \pm 0.0007$ \\
K=256 & $0.3383 \pm 0.1888$ & $0.1585 \pm 0.1274$ & $0.0407 \pm 0.0273$ & $0.0056 \pm 0.0008$ \\
\bottomrule
\end{tabular}
\end{table}


\end{document}